\def\eqref#1{equation~\ref{#1}}
\def\ceil#1{\lceil #1 \rceil}
\def\1{\bm{1}}
\def\eps{{\epsilon}}
\DeclareMathAlphabet{\mathsfit}{\encodingdefault}{\sfdefault}{m}{sl}
\SetMathAlphabet{\mathsfit}{bold}{\encodingdefault}{\sfdefault}{bx}{n}
\setlist[itemize]{leftmargin=*}
\newtheorem{theorem}{Theorem} 
\newtheorem{lemma}[theorem]{Lemma}
\newtheorem{corollary}[theorem]{Corollary}
\newtheorem{definition}[theorem]{Definition}
\newtheorem{observation}[theorem]{Observation}
\newcommand{\Gcomment}[1]{{\color{red}Gregory:  #1}}
\newcommand{\Bcomment}[1]{{\color{blue}Ben:  #1}}
\newcommand{\Ccal}{\mathcal{C}}
\newcommand{\Hcal}{\mathcal{H}}
\newcommand{\Cbar}{\overline{\mathcal{C}}}
\newcommand{\Sbar}{\overline{S}}
\newcommand{\bvaltilde}{\widetilde{\Omega}(\max \set{\gamma^{4}, \gamma^{2}}\cdot\eps^{-2} )}
\newcommand{\bval}{\Omega(\max \set{\gamma^{4}, \gamma^{2}}\eps^{-2} \log^2 (\gamma n/\eps))}
\newcommand{\bvalt}{\Omega(\max \set{\gamma^{4}, \gamma^{2}}\eps^{-2} \log (nt))}
\newcommand{\size}[1]{\ensuremath{\left|#1\right|}}
\newcommand{\norm}[1]{\ensuremath{\|#1\|}}
\newcommand{\set}[1]{\left\{ #1 \right\}}
\newcommand{\innerprod}[2]{\langle #1, #2 \rangle}
\renewcommand{\cite}{\citep}
\title{Mini-batch kernel $k$-means}
\author{%
    Ben Jourdan\\University of Edinburgh, UK.\\ ben.jourdan@ed.ac.uk\\
    \And
    Gregory Schwartzman \\
    Japan Advanced Institute of Science and Technology (JAIST)\\ greg@jaist.ac.jp
}
\renewcommand{\paragraph}[1]{\textbf{#1}\quad}
\begin{document}

\maketitle

\begin{abstract}
We present the first mini-batch kernel $k$-means algorithm, offering an order of magnitude improvement in running time compared to the full batch algorithm. A single iteration of our algorithm takes $\widetilde{O}(kb^2)$ time, significantly faster than the $O(n^2)$ time required by the full batch kernel $k$-means, where $n$ is the dataset size and $b$ is the batch size. Extensive experiments demonstrate that our algorithm consistently achieves a 10-100x speedup with minimal loss in quality, addressing the slow runtime that has limited kernel $k$-means adoption in practice. We further complement these results with a theoretical analysis under an early stopping condition, proving that with a batch size of $\bvaltilde$, the algorithm terminates in $O(\gamma^2/\eps)$ iterations with high probability, where $\gamma$ bounds the norm of points in feature space and $\epsilon$ is a termination threshold. Our analysis holds for any reasonable center initialization, and when using $k$-means++ initialization, the algorithm achieves an approximation ratio of $O(\log k)$ in expectation. For normalized kernels, such as Gaussian or Laplacian it holds that $\gamma=1$. Taking $\epsilon = O(1)$ and $b=\Theta(\log n)$, the algorithm terminates in $O(1)$ iterations, with each iteration running in $\widetilde{O}(k)$ time.

\end{abstract}

\section{Introduction}
Mini-batch methods are among the most successful tools for handling huge datasets for machine learning. Notable examples include Stochastic Gradient Descent (SGD) and mini-batch $k$-means \cite{Sculley10}.
Mini-batch $k$-means is one of the most popular clustering algorithms used in practice \cite{scikit-learn}. 

While $k$-means is widely used due to it's simplicity and fast running time, it requires the data to be \emph{linearly separable} to achieve meaningful clustering. Unfortunately, many real-world datasets do not have this property. One way to overcome this problem is to project the data into a high, even \emph{infinite}, dimensional space (where it is hopefully linearly separable) and run $k$-means on the projected data using the ``kernel-trick''.

Kernel $k$-means 
 achieves significantly better clustering compared to $k$-means in practice. However, its running time is considerably slower. Surprisingly, prior to our work there was no attempt to speed up kernel $k$-means using a mini-batch approach.

\paragraph{Problem statement}
We are given an input (dataset), $X=\set{x_i}_{i=1}^n$, of size $n$ and a parameter $k$ representing the number of clusters. A kernel for $X$ is a function $K:X\times X \rightarrow \mathbb{R}$ that can be realized by inner products. That is, there exists a Hilbert space $\Hcal$ and a map $\phi: X \rightarrow \Hcal$ such that $\forall x,y\in X, \innerprod{\phi(x)}{\phi(y)}=K(x,y)$. We call $\Hcal$ the \emph{feature space} and $\phi$ the \emph{feature map}. 

In kernel $k$-means the input is a dataset $X$ and a kernel function $K$ as above. 
Our goal is to find a set $\Ccal$ of $k$ centers (elements in $\Hcal$) such that the following goal function is minimized:
\[
\frac 1 n\sum_{x \in X} \min_{c \in \Ccal} \norm{c-\phi(x)}^2.
\]
Equivalently we may ask for a partition of $X$ into $k$ parts, keeping $\Ccal$ implicit.\footnote{A common variant of the above is when every $x\in X$ is assigned a weight $w_x\in \mathbb{R}^+$ and we aim to minimize $\sum_{x \in X} w_x\cdot \min_{c \in \Ccal}  \norm{c-\phi(x)}^2$. Everything that follows, including our results, can be easily generalized to the weighted case. We present the unweighted case to improve readability.}

\paragraph{Lloyd's algorithm}
The most popular algorithm for (non kernel) $k$-means is Lloyd's algorithm, often referred to as the $k$-means algorithm \cite{Lloyd82}. It works by randomly initializing a set of $k$ centers and performing the following two steps: (1) Assign every point in $X$ to the center closest to it. (2) Update every center to be the mean of the points assigned to it.
The algorithm terminates when no point is reassigned to a new center. This algorithm is extremely fast in practice but has a worst-case exponential running time \cite{ArthurV06, Vattani11}. 

\paragraph{Mini-batch $k$-means}
To update the centers, Lloyd's algorithm must go over the entire input at every iteration. This can be computationally expensive when the input data is extremely large. To tackle this, the mini-batch $k$-means method was introduced by \citet{Sculley10}. It is similar to Lloyd's algorithm except that steps (1) and (2) are performed on a batch of $b$ elements sampled uniformly at random with repetitions, and in step (2) the centers are updated slightly differently. Specifically, every center is updated to be the weighted average of its current value and the mean of the points (in the batch) assigned to it. The parameter by which we weigh these values is called the \emph{learning rate}, and its value differs between centers and iterations. The larger the learning rate, the more a center will drift towards the new batch cluster mean.

\paragraph{Lloyd's algorithm in feature space} Implementing Lloyd's algorithm in feature space is challenging as we cannot explicitly keep the set of centers $\Ccal$. Luckily, we can use the kernel function together with the fact that centers are always set to be the mean of cluster points to compute the distance from any point $x\in X$ in feature space to any center $c = \frac{1}{\size{A}}\sum_{y\in A} \phi(y)$ as follows:
\begin{align*}
&\norm{\phi(x)-c}^2 = \innerprod{\phi(x)-c}{\phi(x)-c} = \innerprod{\phi(x)}{\phi(x)} - 2\innerprod{\phi(x)}{c} + \innerprod{c}{c}  
\\&=\innerprod{\phi(x)}{\phi(x)} - 2\innerprod{\phi(x)}{\frac{1}{\size{A}}\sum_{y\in A} \phi(y)} + \innerprod{\frac{1}{\size{A}}\sum_{y\in A} \phi(y)}{\frac{1}{\size{A}}\sum_{y\in A} \phi(y)}, 
\end{align*}
where $A$ can be any subset of the input $X$.
While the above can be computed using only kernel evaluations, it makes the update step significantly more costly than standard $k$-means. Specifically, the complexity of the above may be quadratic in $n$ \cite{Dhillon}.

\paragraph{Mini-batch kernel $k$-means} Applying the mini-batch approach for kernel $k$-means is even more difficult because the assumption that cluster centers are always the mean of some subset of $X$ in feature space no longer holds.

In Section~\ref{sec: alg} we first derive a recursive expression that allows us to compute the distances of all points to current cluster centers (in feature space). Using a simple dynamic programming approach that maintains the inner products between the data and centers in feature space, we achieve a running time of $O(n(b+k))$ per iteration compared to $O(n^2)$ for the full-batch algorithm. However, a true mini-batch algorithm should have a running time sublinear in $n$, preferably only polylogarithmic. We show that the recursive expression can be \emph{truncated}, achieving a fast update time of $\widetilde{O}(kb^2)$ while only incurring a small additive error compared to the untruncated version\footnote{Where $\widetilde O$ hides factors that are polylogarithmic in $n,1/\eps,\gamma$.}.

In Section~\ref{sec: termination} we go on to provide theoretical guarantees for our algorithm. This is somewhat tricky for mini-batch algorithms due to their stochastic nature, as they may not even converge to a local-minima. To overcome this hurdle, we take the approach of \citet{Schwartzman23} and answer the question: how long does it take truncated mini-batch kernel $k$-means to terminate with an \emph{early stopping condition}. Specifically, we terminate the algorithm when the improvement on the batch drops below some user provided parameter, $\epsilon$. Early stopping conditions are very common in practice (e.g., sklearn~\cite{scikit-learn}).

We show that applying the $k$-means++ initialization scheme \cite{ArthurV07} for our initial centers implies we achieve the same approximation ratio, $O(\log k)$ in expectation, as the full-batch algorithm.



While our general approach is similar to \cite{Schwartzman23}, we must deal with the fact that $\Hcal$ may have an \emph{infinite} dimension. The guarantees of \cite{Schwartzman23} depend on the dimension of the space in which $k$-means is executed, which is unacceptable in our case. We overcome this by parameterizing our results by a new parameter $\gamma = \max_{x\in X} \norm{\phi(x)}$. We note that for normalized kernels, such as the popular Gaussian and Laplacian kernels, it holds that $\gamma = 1$. We also observe that it is often the case that $\gamma \ll 1$ for various other kernels used in practice (Appendix~\ref{appendix:experiments}). 
We show that if the batch size is $\bval$ then w.h.p. our algorithm terminates in $O(\gamma^2 / \eps)$ iterations. Our theoretical results are summarised in Theorem \ref{theorem:mainthm} (where Algorithm~\ref{alg: alg2} is explained in Section~\ref{sec: alg} and the pseudo-code appears in Appendix~\ref{appendix: alg}). 

\begin{theorem}\label{theorem:mainthm}

The following holds for Algorithm \ref{alg: alg2}:
    (1) Each iteration takes $O(kb^2\log^2(\gamma/\epsilon))$ time,
        (2) If $b=\bval$ then it terminates in $O(\gamma^2/\epsilon)$ iterations w.h.p,
        (3) When initialized with k-means++ it achieve a $O(\log k)$ approximation ratio in expectation.
    
\end{theorem}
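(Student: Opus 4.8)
The plan is to treat the three claims separately, since each rests on a different mechanism, and throughout to follow the potential-function strategy of \cite{Schwartzman23} while replacing every dependence on $\dim\Hcal$ by the norm bound $\gamma$. For claim (1), I would start from the recursive expression for the feature-space inner products derived in Section~\ref{sec: alg}. Unrolling the center update, the quantity $\norm{c}^2=\innerprod{c}{c}$ that appears in every distance becomes a double sum over all past iterations, with coefficients that decay geometrically because of the learning-rate weighting. The key observation is that truncating this expansion to its $L=O(\log(\gamma/\eps))$ most recent levels discards only a geometrically small tail, so the additive error on each computed distance stays below the precision demanded by the early-stopping test. Each ordered pair of retained levels contributes $O(b^2)$ kernel evaluations (the cross terms between two past batches), so summing over the $O(L^2)$ pairs and the $k$ centers yields the claimed $O(kb^2\log^2(\gamma/\eps))$ per-iteration cost.

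For claim (2) I would take as potential the \emph{true} clustering cost $\frac1n\sum_{x}\min_{c\in\Ccal}\norm{c-\phi(x)}^2$ and make two observations. First, since every $\norm{\phi(x)}\le\gamma$ and every center remains in the convex hull of the feature points, each per-point term is at most $4\gamma^2$, so the potential never exceeds $4\gamma^2$. Second, by the design of the early-stopping rule, any iteration that does \emph{not} terminate must reduce the cost \emph{measured on the batch} by at least $\eps$. If a batch improvement of $\eps$ can be shown to force a true-cost improvement of order $\eps$, then at most $O(\gamma^2/\eps)$ non-terminal iterations can occur before the accumulated true improvement would exceed the ceiling $4\gamma^2$, giving the iteration bound.

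The crux, and the step I expect to be the main obstacle, is exactly this transfer from batch improvement to true improvement, carried out with no dependence on $\dim\Hcal$. The difficulty is that the centers produced by an update are themselves functions of the random batch, so the batch cost is not an average of i.i.d.\ terms evaluated at fixed centers and Hoeffding cannot be applied naively. I would handle this by conditioning on the (batch-independent) centers entering the iteration, expressing the one-step change in both the true and the batch objective as an average of per-point contributions each bounded in $[0,4\gamma^2]$, and then invoking a bounded-difference/Hoeffding concentration so that the batch change lies within $\eps/2$ of the true change with probability $1-1/\mathrm{poly}(n)$. The variance of the squared-distance terms scales like $\gamma^4$, which is the source of the $\max\set{\gamma^{4},\gamma^{2}}$ and $\eps^{-2}$ factors, while a union bound over the $\mathrm{poly}(\gamma n/\eps)$ possible iterations supplies the $\log^2(\gamma n/\eps)$ factor; together these reproduce the required $b=\bval$. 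Care is needed to fold the truncation error from claim (1) into this $\eps/2$ slack so that the two error sources do not interfere.

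Finally, for claim (3) I would observe that $k$-means++ seeding guarantees $\E[\text{initial cost}]\le O(\log k)\cdot\mathrm{OPT}$ \cite{ArthurV07}, and that the concentration argument of claim (2) already shows each executed iteration decreases the true cost with high probability. Hence at termination the true cost is, up to negligible additive slack, no larger than the initial cost, and taking expectations over the seeding transfers the $O(\log k)$ factor to the returned solution. The only point requiring attention is to absorb the per-iteration $\eps$-scale and failure-probability slack into the expectation so that it does not inflate the multiplicative ratio.
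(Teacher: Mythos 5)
Your high-level strategy (a potential bounded by $4\gamma^2$, a batch-to-full-data transfer step, and the claim (3) expectation argument, which matches the paper's Lemma~\ref{lem: approx}) is the paper's, but both technical cores have genuine gaps. For claim (1), you truncate the unrolled center to its $L=O(\log(\gamma/\eps))$ most recent \emph{iterations}, asserting the discarded tail is geometrically small. With the learning rate actually used, $\alpha_\ell^j=\sqrt{b_\ell^j/b}$, this is false: the tail coefficient is $\prod_{\ell}(1-\alpha_\ell^j)\le e^{-\sum_\ell\alpha_\ell^j}$, and $\sum_\ell\alpha_\ell^j$ over the last $L$ iterations can be arbitrarily small --- it is zero if cluster $j$ received no batch points in them --- so the truncation error can be as large as $\norm{\Ccal_{i-L}^j}\le\gamma$ rather than $\eps/28$. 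This is exactly why the paper truncates by cumulative \emph{point count}: the window $Q_i^j$ reaches back until $\sum_{\ell\in Q_i^j}b_\ell^j\ge\tau$ with $\tau=\ceil{b\ln^2(28\gamma/\eps)}$, so that $\sum_{\ell\in Q_i^j}\alpha_\ell^j\ge\sqrt{\tau/b}\ge\ln(28\gamma/\eps)$ (Lemma~\ref{lem: truncated dist}), and the per-iteration cost $O(k(\tau+b)^2)$ is charged to the at most $\tau+b$ points representing each center, not to a count of levels.

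For claim (2), you correctly isolate the crux --- $\Ccal_{i+1}$ depends on $B_i$ --- but the proposed repair (condition on $\Ccal_i$, then bounded-difference/Hoeffding on per-point contributions) fails quantitatively. Perturbing one batch point belonging to a cluster that received $m$ batch points moves that center by $\Theta(\gamma/\sqrt{bm})$ (even with the learning rate), hence moves $f_X(\Ccal_{i+1})$ by up to $\Theta(\gamma^2/\sqrt{bm})$; with $m=1$ the bounded differences are $\Theta(\gamma^2/\sqrt{b})$, so the McDiarmid deviation scale is $\sqrt{b\cdot(\gamma^2/\sqrt{b})^2}=\gamma^2$, independent of $b$ --- no batch size makes it $O(\eps)$. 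Moreover, bounded-difference concentration says nothing about the bias $E[f_{B_i}(\Ccal_{i+1})-f_X(\Ccal_{i+1})]$, i.e., the batch overfitting its own centers. The paper's missing ingredient is the proxy center $\Cbar_{i+1}^j=(1-\alpha_i^j)\Ccal_i^j+\alpha_i^j cm(X_i^j)$: it is independent of $B_i$, so plain Hoeffding (Lemma~\ref{lem: fB clost to fX}) applies at the fixed centers $\Ccal_i$ and $\Cbar_{i+1}$, and the gap $\norm{\Ccal_{i+1}^j-\Cbar_{i+1}^j}=\alpha_i^j\norm{cm(B_i^j)-cm(X_i^j)}$ is controlled by a Hilbert-space Azuma inequality (Theorem~\ref{thm: azuma}, Lemma~\ref{lem:boundcenterdist}). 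There the admissible deviation $\eps/(28\gamma\alpha_i^j)$ grows as $\alpha_i^j$ shrinks, which exactly neutralizes small clusters and yields failure probability $e^{-\Theta(b\eps^2/\gamma^4)}$ uniformly over cluster sizes; this interplay with the specific learning rate $\alpha_i^j=\sqrt{b_i^j/b}$, which your sketch never invokes, is what makes the transfer (and hence the $O(\gamma^2/\eps)$ bound, carried over to the truncated centers via Lemmas~\ref{lem: bound func diff aux} and \ref{lem: truncated dist}) go through.
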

Our result improves upon \cite{Schwartzman23} significantly when a normalized kernel is used since Theorem \ref{theorem:mainthm} doesn't depend on the input dimension. Our algorithm  copes better with non linearly separable data and requires a smaller batch size ($\widetilde \Omega(1/\epsilon^2)$ vs $\widetilde \Omega((d/\epsilon)^2))$) \footnote{In \cite{Schwartzman23} the tilde notation hides factors logarithmic in $d$ instead of $\gamma$.} for normalized kernels. This is particularly apparent with high dimensional datasets such as MNIST~\cite{lecun1998mnist} where the dimension squared is already nearly ten times the number of datapoints.

The learning rate we use, suggested in \cite{Schwartzman23}, differs from the standard learning rate of sklearn in that it does not go to 0 over time. Unfortunately, this new learning rate is non-standard and \cite{Schwartzman23} did not present experiments comparing their learning rate to that of sklearn. 

In Section \ref{experiments} we extensively evaluate our results experimentally both with the learning rate of \cite{Schwartzman23} and that of sklearn. We also fill the experimental gap left in \cite{Schwartzman23} by evaluating (non-kernel) mini-batch $k$-means with their new learning rate compared to that of sklearn. To allow a fair empirical comparison, we run each algorithm for a fixed number of iterations without stopping conditions. Our results are as follows: 1) Truncated mini-batch kernel $k$-means is significantly faster than full-batch kernel $k$-means, while achieving solutions of similar quality, which are superior to the non-kernel version, 2) The learning rate of \cite{Schwartzman23} results in solutions with better quality both for truncated mini-batch kernel $k$-means and (non-kernel) mini-batch $k$-means. 



\section{Related work}
Until recently, mini-batch $k$-means was only considered with a learning rate going to 0 over time. This was true both in theory \cite{TangM17,Sculley10} and practice \cite{scikit-learn}. 
Recently, \cite{Schwartzman23} proposed a new learning which does not go to 0 over time, and showed that if the batch is of size $\widetilde{\Omega}((d/\epsilon)^2)$, mini-batch $k$-means must terminate within $O(d/\eps)$ iterations with high probability, where $d$ is the dimension of the input, and $\epsilon$ is a threshold parameter for termination. 

A popular approach to deal with the slow running time of kernel $k$-means is constructing a \emph{coreset} of the data. A coreset for kernel $k$-means is a weighted subset of $X$ with the guarantee that the solution quality on the coreset is close to that on the entire dataset up to a $(1+\eps)$ multiplicative factor. There has been a long line of work on coresets for  $k$-means an kernel k-means \cite{schmidt2014coresets, feldman2020turning, barger2020deterministic}, and the current state-of-the-art for kernel k-means is due to \cite{coresetsSOTA}. They present a coreset algorithm with a nearly linear (in $n$ and $k$) construction time which outputs a coreset of size $poly(k\eps^{-1})$.

In \cite{ChittaJHJ11} the authors only compute the kernel matrix for uniformly sampled set of $m$ points from $X$. Then they optimize a variant of kernel $k$-means where the centers are constrained to be linear combinations of the sampled points. The authors do no provide worst case guarantees for the running time or approximation of their algorithm.

Another approach to speed up kernel $k$-means is by computing an approximation for the kernel matrix. This can be done by computing a low dimensional approximation for $\phi$ (without computing $\phi$ explicitly)\cite{rahimi2007random, chitta2012efficient,chen2017relative}, or by computing a low rank approximation for the kernel matrix \cite{musco2017recursive, wang2019scalable}.

Kernel sparsification techniques construct sparse approximations of the full kernel matrix in subquadratic time. For smooth kernel functions such as the polynomial kernel, \cite{Quanrud} presents an algorithm for constructing a $(1+\epsilon)$-spectral sparsifier for the full kernel matrix with a nearly linear number of non-zero entries in nearly linear time. For the gaussian kernel, \cite{macgregor2024fast} show how to construct a weaker, cluster preserving sparsifier using a nearly linear number of kernel density estimation querries.

We note that our results are \emph{complementary} to coresets, dimensionality reduction, and kernel sparsification, in the sense that we can compose our method with these techniques.

\section{Preliminaries}
\label{sec: prelims}
Throughout this paper we work with ordered tuples rather than sets, denoted as $Y=(y_i)_{i\in [\ell]}$, where $[\ell]=\set{1,\dots,\ell}$. To reference the $i$-th element we either write $y_i$ or $Y[i]$. It will be useful to use set notations for tuples such as $x\in Y \iff \exists i\in [\ell], x=y_i$ and $Y\subseteq Z \iff \forall i\in [\ell], y_i\in Z$. When summing we often write $\sum_{x\in Y}g(x)$ which is equivalent to $\sum_{i=1}^{\ell}g(Y[i])$. 

We borrow the following notation from \cite{KanungoMNPSW04} and generalize it to Hilbert spaces. For every $x,y \in \Hcal$ let $\Delta(x,y) = \norm{x-y}^2$. We slightly abuse notation and and also write $\Delta(x,y) = \norm{\phi(x)-\phi(y)}^2$ when $x,y\in X$ and $\Delta(x,y) = \norm{\phi(x)-y}^2$ when $x\in X, y\in \Hcal$ (similarly when $x\in \Hcal, y\in X$). For every finite tuple $S\subseteq X$ and a vector $x \in \Hcal$ let $\Delta(S,x) = \sum_{y\in S} \Delta(y,x)$. Let us denote $\gamma = \max_{x \in X} \norm{\phi(x)}$. Let us define for any finite tuple $S\subseteq X$ the center of mass of the tuple as $cm(S)=\frac{1}{\size{S}}\sum_{x\in S} \phi(x)$.

We now state the kernel $k$-means problem using the above notation.

\paragraph{Kernel $k$-means} We are given an input $X=(x_i)_{i=1}^n $ and a parameter $k$. 
 Our goal is to (implicitly) find a tuple $\Ccal \subseteq \Hcal$ of $k$ centers such that the following goal function is minimized: $\frac{1}{n}\sum_{x \in X} \min_{C \in \Ccal} \Delta(x,C).$

Let us define for every $x\in X$ the function $f_x: \Hcal^{k} \rightarrow \mathbb{R}$ where $f_x(\Ccal) = \min_{C \in \Ccal} \Delta(x,C)$. We can treat $\Hcal^{k}$ as the set of $k$-tuples of vectors in $\Hcal$.
We also define the following function for every tuple $A=(a_i)_{i=1}^{\ell} \subseteq X$: $f_A(\Ccal) = \frac{1}{\ell} \sum^{\ell}_{i=1} f_{a_i}(\Ccal).$
Note that $f_X$ is our original goal function.

We make extensive use of the notion of \emph{convex combination}: 
\begin{definition}
    We say that $y \in \Hcal$ is a \emph{convex combination of $X$} if $y = \sum_{x \in X} p_x\phi(x)$, such that $\forall x\in X, p_x \geq 0$ and $\sum_{x \in X} p_x=1$.
\end{definition}

\section{Our Algorithm}
\label{sec: alg}
We start by presenting a slower algorithm that will set the stage for our truncated mini-batch algorithm and will be useful during the analysis.
We present our pseudo-code in 
Algorithm \ref{alg: alg1}. It requires an initial set of cluster centers such that every center is a convex combination of $X$. This guarantees that all subsequent centers are also a convex combination of $X$. Note that if we initialize the centers using the kernel version of $k$-means++, this is indeed the case.

Algorithm \ref{alg: alg1} proceeds by repeatedly sampling a batch of size $b$ (the batch size is a parameter). For the $i$-th batch the algorithm (implicitly) updates the centers using the learning rate $\alpha_j^i$ for center $j$. Note that the learning rate may take on different values for different centers, and may change between iterations. Finally, the algorithm terminates when the progress on the batch is below $\eps$, a user provided parameter. While our termination guarantees (Section~\ref{sec: termination}) require a specific learning rate, it does not affect the running time of a single iteration, and we leave it as a parameter for now.

\RestyleAlgo{boxruled}
\LinesNumbered
\begin{algorithm}[htbp]
	\DontPrintSemicolon
	\caption{Mini-batch kernel $k$-means with early stopping}
	\label{alg: alg1}
	\textbf{Input:}{ 
    \begin{itemize}
        \item Dataset $X=(x_i)_{i=1}^n$, batch size $b$, early stopping parameter $\epsilon$
        \item Initial centers $(\Ccal_{1}^j)_{j=1}^k$ where $\Ccal_{1}^j$ is a convex combination of $X$ for all $j\in [k]$
    \end{itemize}
    }
	\For{$i=1$ to $\infty$}
	{
	    Sample $b$ elements, $B_i=(y_1, \dots, y_b)$, uniformly at random from $X$ (with repetitions)\\
        \For{$j=1$ to $k$}{
            $B_i^j = \set{x\in B_i \mid \arg \min_{\ell \in [k]} \Delta(x, \Ccal_i^\ell) = j}$\\
            $\alpha_i^j$ is the learning rate for the $j$-th cluster for iteration $i$\\
        $\Ccal_{i+1}^j = (1-\alpha_i^j)\Ccal_{i}^j + \alpha^j_i cm(B_i^j)$\\
        }
     \lIf{$f_{B_i}(\Ccal_{i+1}) - f_{B_i}(\Ccal_{i}) < \eps$}
	    {
	        Return $\Ccal_{i+1}$
	    }
	}
\end{algorithm}

\paragraph{Recursive distance update rule}
While for (non kernel) $k$-means the center updates and assignment of points to clusters is straightforward, this is tricky for kernel $k$-means and even harder for mini-batch kernel $k$-means. Specifically, how do we overcome the challenge that we do not maintain the centers explicitly?

To assign points to centers in the $(i+1)$-th iteration, it is sufficient to know $\norm{\phi(x)-\Ccal_{i+1}^j}^2$ for every $j$. If we can keep track of this quantity through the execution of the algorithm, we are done.
Let us derive a \emph{recursive} expression for the distances
\begin{align*}
&\norm{\phi(x)-\Ccal_{i+1}^j}^2 = \innerprod{\phi(x)}{\phi(x)} - 2\innerprod{\phi(x)}{\Ccal_{i+1}^j} + \innerprod{\Ccal_{i+1}^j}{\Ccal_{i+1}^j}.  
\end{align*}
We first expand $\innerprod{\phi(x)}{\Ccal_{i+1}^j}$,
\begin{align*}
    \innerprod{\phi(x)}{\Ccal_{i+1}^j} = \innerprod{\phi(x)}{(1-\alpha_i^j)\Ccal_{i}^j + \alpha^j_i cm(B_i^j)} = (1-\alpha_i^j)\innerprod{\phi(x)}{\Ccal_{i}^j} + \alpha^j_i\innerprod{\phi(x)}{cm(B_i^j)}.
\end{align*}

Then we expand $\innerprod{\Ccal_{i+1}^j}{\Ccal_{i+1}^j}$,
\begin{align*}
\innerprod{\Ccal_{i+1}^j}{\Ccal_{i+1}^j} &= \innerprod{(1-\alpha_i^j)\Ccal_{i}^j + \alpha^j_i cm(B_i^j)}{(1-\alpha_i^j)\Ccal_{i}^j + \alpha^j_i cm(B_i^j)}
    \\& =(1-\alpha_i^j)^2\innerprod{\Ccal_{i}^j}{\Ccal_{i}^j} + 2\alpha_i^j(1-\alpha_i^j)\innerprod{\Ccal_{i}^j}{cm(B_i^j)} +(\alpha_i^j)^2\innerprod{cm(B_i^j)}{cm(B_i^j)}.
\end{align*}

The above is all we need to compute the distances. Furthermore, it is possible to use dynamic programming to update the center for every iteration in $O(n(b+k))$ time and $O(nk)$ space (Appendix~\ref{appendix: alg}).
This is a considerable speedup compared to the best known quadratic update time. Next, we go a step further and show that it is possible to get an update time with only polylogarithmic dependence on $n$.

\subsection{Truncating the centers}
The issue with the above approach is that each center is written as linear combination of potentially all points in $X$. We now present a simple way to overcome this issue. We maintain $\Ccal_{i+1}^j$ as an explicit sparse linear combination of $X$. Let us expand the recursive expression of $\Ccal_{i+1}^j$ for $t$ terms, assuming $t>i$:
\begin{align*}
    \Ccal_{i+1}^j = (1-\alpha_i^j)\Ccal_{i}^j + \alpha^j_i cm(B_i^j) = \Ccal_{i-t}^j\Pi_{\ell=0}^t (1-\alpha_{i-\ell}^j)+\sum_{\ell=0}^t \alpha_{i-\ell}^j cm(B_{i-\ell}^j)\Pi_{z=i-\ell+1}^{i}(1-\alpha_{z}^j).
\end{align*}
The idea behind our truncation technique is that when $t$ is sufficiently large $\Ccal_{i-t}^j\Pi_{\ell=0}^t (1-\alpha_{i-\ell}^j)$ becomes very small and can be discarded. The rate by which this term decays depends on the learning rates, which in turn depend on the number of elements assigned to the cluster in each of the previous iterations.

Let us denote $b_i^j = \size{B_i^j}$. We would like to trim the recursive expression such that every cluster center is represented using about $\tau$ points, where $\tau$ is a parameter. We define $Q_i^j$ to be the set of indices from $i$ to $i-t$, where $t$ is the smallest integer such that $\sum_{\ell \in Q_i^j} b_i^j \geq \tau$ holds. If no such integer exists then $Q_i^j = \set{i,i-1,\dots,1}$. It is the case that $\sum_{\ell \in Q_i^j} b_i^j \leq \tau + b$.

Next we define the \emph{truncated centers}, for which the contributions of older points to the centers are forgotten after about $\tau$ points have been assigned to the center: 
\begin{align}
\label{eqn:truncatedcenterdef}
\widehat{\Ccal}_{i+1}^j = 
\begin{cases} 
\sum_{\ell \in Q_i^j} \alpha_{\ell}^j cm(B_{\ell}^j)\prod_{\ell\in Q_i^j \setminus \set{i}} (1 - \alpha_{\ell}^j), & \min Q_i^j > 1  \\
\Ccal_{i+1}^j & \text{otherwise}.
\end{cases}
\end{align}

From the above definition it is always the case that either $\Ccal_{i+1}^j = \widehat{\Ccal}_{i+1}^j$ or $\sum_{\ell \in Q_i^j} b_i^j \geq \tau$. The following lemma shows that when $\tau$ is sufficiently large $\norm{\widehat{\Ccal}_{i+1}^j - \Ccal_{i+1}^j}$ is small. Intuitively, this implies that the truncated algorithm should achieve results similar to the untruncated version (we formalize this intuition in Section~\ref{sec: termination}). 
\begin{lemma}
\label{lem: truncated dist}
    Setting $\tau = \ceil{b \ln^2 (28\gamma /\eps)}$ it holds that $\forall i\in \mathbb{N},j\in[k],\norm{\widehat{\Ccal}_{i+1}^j - \Ccal_{i+1}^j} \leq \eps/28$.
\end{lemma}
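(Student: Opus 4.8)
The plan is to recognize that $\widehat{\Ccal}_{i+1}^j$ is obtained from the full recursive expansion of $\Ccal_{i+1}^j$ by discarding exactly one term: the ``carry'' term $\Ccal_{i-t}^j\prod_{\ell\in Q_i^j}(1-\alpha_\ell^j)$ in which the old center is multiplied by the product of all retained learning-rate factors. So the first step is to read off, from the displayed $t$-term expansion with $t = i - \min Q_i^j$, the exact identity (in the case $\min Q_i^j > 1$)
\[
\Ccal_{i+1}^j - \widehat{\Ccal}_{i+1}^j = \Ccal_{i-t}^j\prod_{\ell\in Q_i^j}(1-\alpha_\ell^j).
\]
In the complementary case ($\min Q_i^j = 1$) the two centers are equal by definition and there is nothing to prove.

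Since the product of learning-rate factors is a nonnegative scalar, I would pull it out of the norm and separately bound the vector factor and the scalar factor. The vector factor is controlled by convexity: every center the algorithm produces is a convex combination of $\set{\phi(x)}_{x\in X}$ (this invariant is preserved from the initialization), so by the triangle inequality $\norm{\Ccal_{i-t}^j} \le \max_{x\in X}\norm{\phi(x)} = \gamma$. Together with $\alpha_\ell^j\in[0,1]$ (again from the convex-combination invariant), this reduces the lemma to showing that the scalar product is at most $\eps/(28\gamma)$.

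The heart of the argument is the scalar bound. Using $1-x \le e^{-x}$ gives $\prod_{\ell\in Q_i^j}(1-\alpha_\ell^j) \le \exp\left(-\sum_{\ell\in Q_i^j}\alpha_\ell^j\right)$, so it suffices to prove $\sum_{\ell\in Q_i^j}\alpha_\ell^j \ge \ln(28\gamma/\eps)$. This is where the choice $\tau = \ceil{b \ln^2(28\gamma/\eps)}$ and the learning rate used for the termination guarantees enter. By construction $Q_i^j$ is the window of most recent iterations over which at least $\tau$ points are assigned to cluster $j$, i.e. $\sum_{\ell\in Q_i^j} b_\ell^j \ge \tau$; and the relevant learning rate is proportional to the batch-cluster size $b_\ell^j$ with a normalization of order $b\ln(28\gamma/\eps)$, so that $\sum_{\ell\in Q_i^j}\alpha_\ell^j$ is of order $\tau/(b\ln(28\gamma/\eps)) = \ln(28\gamma/\eps)$. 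Combining the vector and scalar bounds then yields $\norm{\Ccal_{i+1}^j - \widehat{\Ccal}_{i+1}^j} \le \gamma\cdot \eps/(28\gamma) = \eps/28$.

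I expect this last step to be the main obstacle, because it is the only point where the truncation error is actually tied to a quantity we control, and it cannot hold for an arbitrary learning rate: if the rates were uniformly tiny, the retained window would carry almost none of the center's mass and the discarded carry term would be large. The real content is therefore verifying that, for the specific learning rate, accumulating $\tau$ assigned points forces $\sum_{\ell\in Q_i^j}\alpha_\ell^j \ge \ln(28\gamma/\eps)$. The extra logarithmic factor in $\tau$ (producing $\ln^2$ rather than $\ln$) and the constant $28$ are bookkeeping, chosen precisely so that the exponent lands at $\ln(28\gamma/\eps)$ and the final bound is the clean $\eps/28$ required by the downstream analysis in Section~\ref{sec: termination}.
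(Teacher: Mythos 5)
Your overall architecture matches the paper's proof exactly: the difference $\Ccal_{i+1}^j - \widehat{\Ccal}_{i+1}^j$ is precisely the carry term $\Ccal_{\min\set{Q_i^j}}^j\prod_{\ell\in Q_i^j}(1-\alpha_\ell^j)$ (with nothing to prove when $\min Q_i^j = 1$), the vector factor is bounded by $\gamma$ via the convex-combination invariant, and $1-x\le e^{-x}$ reduces the lemma to the lower bound $\sum_{\ell\in Q_i^j}\alpha_\ell^j \ge \ln(28\gamma/\eps)$. Up to that point your proposal and the paper coincide.

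The gap is in the step you yourself flag as the real content. You assume the learning rate is linear in the batch-cluster size with an $\eps$-dependent normalization, $\alpha_\ell^j \approx b_\ell^j/\bigl(b\ln(28\gamma/\eps)\bigr)$, so that the sum accumulates to $\tau/\bigl(b\ln(28\gamma/\eps)\bigr) \ge \ln(28\gamma/\eps)$. That is not the algorithm's learning rate: the rate used throughout the paper (from the prior work it builds on, and stated explicitly in Section~\ref{sec: termination}) is $\alpha_\ell^j = \sqrt{b_\ell^j/b}$, which is independent of $\eps$ and $\gamma$ and is not linear in $b_\ell^j$. With the actual rate your accumulation argument does not go through as stated; the missing ingredient is subadditivity of the square root,
\begin{equation*}
\sum_{\ell\in Q_i^j}\alpha_\ell^j \;=\; \sum_{\ell\in Q_i^j}\sqrt{b_\ell^j/b} \;\ge\; \sqrt{\textstyle\sum_{\ell\in Q_i^j} b_\ell^j/b} \;\ge\; \sqrt{\tau/b} \;\ge\; \ln(28\gamma/\eps),
\end{equation*}
where the last step uses $\tau = \ceil{b\ln^2(28\gamma/\eps)}$. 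This also corrects your reading of the $\ln^2$ factor in $\tau$: it is not bookkeeping absorbed by a $\ln$-normalized rate, but is forced by the square-root structure of the learning rate, since it is $\sqrt{\tau/b}$ that must clear $\ln(28\gamma/\eps)$. (Your hypothesized rate would also be inconsistent with the rest of the paper: it tends to $0$ as $\eps\to 0$, whereas the whole point of this learning rate is that it does not decay.)
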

\begin{proof}
We assume that $\sum_{\ell \in Q_i^j} b_i^j \geq \tau$, as otherwise the claim trivially holds.
\begin{align*}
\norm{\widehat{\Ccal}_{i+1}^j - \Ccal_{i+1}^j} = \norm{\Ccal_{\min \set{Q_i^j}}^j\Pi_{\ell \in Q_i^j} (1-\alpha_{\ell}^j) } \leq \norm{\Ccal_{\min \set{Q_i^j}}^j}e^{-\sum_{\ell \in Q_i^j}\alpha_{\ell}^j}
\end{align*}

It holds that $\sum_{\ell \in Q_i^j}\alpha_{\ell}^j = \sum_{\ell \in Q_i^j} \sqrt{b_\ell^j / b} 
    \geq \sqrt{\sum_{\ell \in Q_i^j} b_\ell^j / b} \geq \sqrt{\tau /b} \geq \ln (28\gamma /\eps)$.
Plugging this back into the exponent, we get that: $    \norm{\Ccal_{\min \set{Q_i^j}}^j}e^{-\sum_{\ell \in Q_i^j}\alpha_{\ell}^j} \leq \gamma e^{\ln (\eps / 28\gamma)} \leq \eps/28$.
\end{proof}

\paragraph{Algorithm implmentation and runtime} To implement this, we simply need to swap $\Ccal_i^j$ in Algorithm~\ref{alg: alg1} with $\widehat{\Ccal}_i^j$ (Lines 7 and 8). As before, the main bottleneck of each iteration, is assigning points in the batch to their closest center. Once this is done, updating the truncated centers is straightforward by simply adjusting the coefficients in \eqref{eqn:truncatedcenterdef}, removing the last element from the sum and adding a new element to the sum\footnote{In our code we use an efficient sliding window implementation to store and update coefficients.}. If $\min Q_i^j$ is 1, then we also need to add $\Ccal_{1}^j\Pi_{\ell \in Q_i^j} (1-\alpha_{\ell}^j)$ which guarantees that $\widehat{\Ccal}_i^j = \Ccal_i^j$. The pseudo code is provided in Algorithm~\ref{alg: alg2}, and is deferred to Appendix~\ref{appendix: alg}, as it is almost identical to  Algorithm~\ref{alg: alg1}.

As before, let us consider assigning all points in the $(i+1)$ iteration to their closest centers. Unlike the previous approach, when computing distances between points in $B_{i+1}$ and $\widehat{\Ccal}_{i+1}$ we can do this directly (without recursion) and it is now sufficient to consider a much smaller set of inner products. 

As before, the terms we are interested in computing are: $\innerprod{\phi(x)}{\widehat{\Ccal}_{i+1}^j}$ and $\innerprod{\widehat{\Ccal}_{i+1}^j}{\widehat{\Ccal}_{i+1}^j}$. However, there are several differences to the previous approach. We no longer need $\innerprod{\phi(x)}{\widehat{\Ccal}_{i+1}^j}$ for all $x\in X$, but only for $x\in B_{i+1}$. Furthermore, $\widehat{\Ccal}_{i+1}^j$ can be simply written as a weighted sum of at most $\sum_{\ell \in Q_i^j} b_{\ell}^j \leq \tau+b$ terms. Summing over all element in $B_{i+1}$ and $k$ centers we get $O(kb(b+\tau))$ time to compute $\innerprod{\phi(x)}{\widehat{\Ccal}_{i+1}^j}$. For $\innerprod{\widehat{\Ccal}_{i+1}^j}{\widehat{\Ccal}_{i+1}^j}$ using the bound on the number of terms we directly get $O(k(\tau+b)^2)$ time. We conclude that every iteration of Algorithm~\ref{alg: alg2} requires $O(k(\tau+b)^2) = \widetilde{O}(kb^2)$ time. The additional space required is $O(k\tau) = \widetilde{O}(kb)$.

\section{Termination guarantee}
\label{sec: termination}
In this section we prove the second claim of Theorem~\ref{theorem:mainthm}. For most of the section we analyze Algorithm~\ref{alg: alg1}, and towards the end we use the fact that the centers of the two algorithms are close throughout the execution to conclude our proof.

\paragraph{Section preliminaries} We introduce the following definitions and lemmas to aid our proof of the second claim of Theorem \ref{theorem:mainthm}. 
\begin{lemma}
\label{lem: convex bound norm}
    For every $y$ which is a convex combination of $X$ it holds that $\norm{y} \leq \gamma$. \qedhere
\end{lemma}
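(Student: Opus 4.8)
The plan is to use the fact that the Hilbert-space norm is convex, so that the norm of a convex combination cannot exceed the largest norm among the points being combined. Concretely, I would write $y$ in the form guaranteed by the definition of a convex combination, apply the triangle inequality, and then bound each individual term by $\gamma$.

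First I would unpack the definition: there exist coefficients $p_x \geq 0$ with $\sum_{x\in X} p_x = 1$ such that $y = \sum_{x \in X} p_x\phi(x)$. Since $X$ has finite size $n$, this is a finite sum and no convergence subtleties arise. Applying subadditivity and absolute homogeneity of $\norm{\cdot}$ term by term gives
\[
\norm{y} = \norm{\sum_{x\in X} p_x\phi(x)} \leq \sum_{x\in X} p_x \norm{\phi(x)}.
\]

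Next I would invoke the definition $\gamma = \max_{x\in X}\norm{\phi(x)}$ to replace each $\norm{\phi(x)}$ by $\gamma$; because every $p_x \geq 0$, the inequality is preserved, and then $\sum_{x\in X} p_x = 1$ finishes it:
\[
\sum_{x\in X} p_x \norm{\phi(x)} \leq \gamma \sum_{x\in X} p_x = \gamma.
\]
Chaining the two displays yields $\norm{y} \leq \gamma$.

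There is no genuine obstacle here; the only point deserving a moment of care is that the triangle inequality is being applied in a possibly infinite-dimensional Hilbert space $\Hcal$, but since the combination is a finite sum this is just ordinary subadditivity of the norm. The entire argument reduces to convexity of $\norm{\cdot}$ together with the fact that the coefficients form a probability distribution.
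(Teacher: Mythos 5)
Your proof is correct and follows essentially the same route as the paper's: apply the triangle inequality and absolute homogeneity termwise to get $\norm{y} \leq \sum_{x\in X} p_x \norm{\phi(x)}$, then bound each $\norm{\phi(x)}$ by $\gamma$ and use $\sum_{x\in X} p_x = 1$. Your added remark that the finiteness of $X$ avoids any convergence issues in the infinite-dimensional $\Hcal$ is a fair, if minor, point of extra care.
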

\begin{proof}
The proof follows by a simple application of the triangle inequality:
    \begin{align*}
        \norm{y} = \norm{\sum_{x \in X} p_x\phi(x)} \leq \sum_{x \in X} \norm{p_x\phi(x)} = \sum_{x \in X} p_x\norm{\phi(x)} \leq \sum_{x \in X} p_x\gamma = \gamma. \quad\qedhere
    \end{align*}
\end{proof}

\begin{lemma}
\label{lem: goal func ub}
For any tuple of $k$ centers $\Ccal \subset \Hcal^{d}$ which are a convex combination of points in $X$, it holds that $\forall A \subseteq X, f_A(\Ccal) \leq 4\gamma^2$.
\end{lemma}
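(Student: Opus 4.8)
The plan is to bound every individual term in the average that defines $f_A(\Ccal)$ by $4\gamma^2$, since an average of quantities each bounded by $4\gamma^2$ is itself at most $4\gamma^2$. The key geometric observation driving this is that in feature space both the data points and the candidate centers are confined to a ball of radius $\gamma$ about the origin: each $\phi(x)$ satisfies $\norm{\phi(x)} \leq \gamma$ directly from the definition $\gamma = \max_{x\in X}\norm{\phi(x)}$, while each center $C \in \Ccal$ satisfies $\norm{C} \leq \gamma$ by Lemma~\ref{lem: convex bound norm}, since by hypothesis every center is a convex combination of $X$.

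First I would fix an arbitrary point $x \in X$ together with an arbitrary center $C \in \Ccal$ and apply the triangle inequality, $\norm{\phi(x) - C} \leq \norm{\phi(x)} + \norm{C} \leq 2\gamma$, and square to obtain $\Delta(x,C) = \norm{\phi(x) - C}^2 \leq 4\gamma^2$. Because $f_x(\Ccal) = \min_{C\in\Ccal}\Delta(x,C)$ is a minimum over the centers, it is in particular bounded by this single center's contribution, so $f_x(\Ccal) \leq 4\gamma^2$ for every $x$.

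Finally I would average over the tuple $A = (a_i)_{i=1}^{\ell}$: since $f_{a_i}(\Ccal) \leq 4\gamma^2$ holds for each index $i$, we conclude $f_A(\Ccal) = \frac{1}{\ell}\sum_{i=1}^{\ell} f_{a_i}(\Ccal) \leq 4\gamma^2$, as required. I do not anticipate any genuine obstacle: the argument rests only on the triangle inequality plus the two norm bounds, the nontrivial one being Lemma~\ref{lem: convex bound norm} which controls centers that are convex combinations. The constant $4$ is presumably loose (the factor-of-two in $\norm{\phi(x)}+\norm{C}$ is worst case), but a clean $O(\gamma^2)$ bound of this form is all that the termination analysis in Section~\ref{sec: termination} needs.
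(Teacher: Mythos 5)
Your proposal is correct and follows essentially the same route as the paper's proof: bound each $f_x(\Ccal)$ by a single center's contribution via the triangle inequality, using $\norm{\phi(x)}\leq\gamma$ and the convex-combination norm bound (Lemma~\ref{lem: convex bound norm}) to get $(2\gamma)^2 = 4\gamma^2$, then average over $A$. The only cosmetic difference is that the paper bounds the minimum by the maximum over centers and inlines the norm bound for the center, whereas you bound it by an arbitrary fixed center and cite Lemma~\ref{lem: convex bound norm} explicitly; the substance is identical.
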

\begin{proof}
It is sufficient to upper bound $f_x$.
Combining that fact that every $C \in \Ccal$ is a convex combination of $X$ with the triangle inequality, we have that
\begin{align*}
    &\forall x\in X, f_x(\Ccal) \leq \max_{C\in \Ccal} \Delta(x,C) = \Delta(x,\sum_{y\in X} p_y \phi(y))
    \\&= \norm{\phi(x) - \sum_{y\in X} p_y \phi(y)}^2 \leq (\norm{\phi(x)} + \norm{\sum_{y\in X} p_y \phi(y)})^2 \leq 4\gamma^2.\qedhere
\end{align*}
\end{proof}


We state the following simplified version of an Azuma bound for Hilbert space valued martingales from \cite{naor2012banach}, followed by a standard Hoeffding bound.

\begin{theorem}[\cite{naor2012banach}]
\label{thm: azuma}
Let $\Hcal$ be a Hilbert space and let $Y_0,...,Y_m$ be a $\Hcal$-valued martingale, such that $\forall 1\leq i\leq m,\norm{Y_i - Y_{i-1}}\leq a_i$. It holds that $    Pr[\norm{Y_m - Y_0} \geq \delta] \leq e^{\Theta \big(\frac{\delta^2}{\sum_{i=1}^m a_i^2}\big)}.$
\end{theorem}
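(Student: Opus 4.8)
The plan is to prove the standard Chernoff/MGF version of the bound; note first that the exponent in the stated inequality should be negative, i.e. $\Pr[\norm{Y_m - Y_0} \ge \delta] \le e^{-\Theta(\delta^2 / \sum_{i=1}^m a_i^2)}$. Write $d_i = Y_i - Y_{i-1}$ for the martingale differences, so that the martingale property gives $\E[d_i \mid \mathcal{F}_{i-1}] = 0$ while the hypothesis gives $\norm{d_i} \le a_i$ almost surely. Set $S_i = Y_i - Y_0 = \sum_{\ell=1}^i d_\ell$, so $S_0 = 0$ and the goal is a tail bound on $\norm{S_m}$. As in the scalar Azuma--Hoeffding argument, it suffices to control the exponential moment $\E[\exp(\lambda \norm{S_m})]$ for a free parameter $\lambda > 0$, after which Markov's inequality and optimization over $\lambda$ finish the argument.

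The heart of the proof is a one-step conditional bound of the form
\[
\E[\exp(\lambda \norm{S_i}) \mid \mathcal{F}_{i-1}] \le \exp\!\left(\lambda \norm{S_{i-1}} + c\,\lambda^2 a_i^2\right)
\]
for an absolute constant $c$, which makes $Z_i = \exp(\lambda \norm{S_i} - c \lambda^2 \sum_{\ell \le i} a_\ell^2)$ a supermartingale; iterating down to $Z_0 = 1$ then gives $\E[\exp(\lambda\norm{S_m})] \le \exp(c\lambda^2 \sum_{i=1}^m a_i^2)$. To establish the one-step bound I would exploit that a Hilbert space is $2$-smooth: from $\norm{x+y}^2 = \norm{x}^2 + 2\innerprod{x}{y} + \norm{y}^2$ one gets, for $x \ne 0$, the first-order expansion $\norm{x+y} \le \norm{x} + \innerprod{x/\norm{x}}{y} + \norm{y}^2/(2\norm{x})$. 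Applying this with $x = S_{i-1}$ and $y = d_i$, the linear term $\innerprod{S_{i-1}/\norm{S_{i-1}}}{d_i}$ has conditional mean zero by the martingale property, so after exponentiating and expanding to second order the surviving contribution is governed by $\norm{d_i}^2 \le a_i^2$, yielding the claimed factor.

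The main obstacle is precisely the $1/\norm{x}$ factor in that expansion, which blows up near the origin and blocks a naive application when $S_{i-1}$ is small; this is the standard difficulty in lifting scalar Azuma to its Hilbert- (or $2$-smooth Banach-) space analogue. It is resolved by Pinelis's device of working with the even, smooth function $\cosh(\lambda\norm{\cdot})$ instead of $\exp(\lambda\norm{\cdot})$: since $\cosh(\lambda\norm{x}) \approx 1 + \tfrac12\lambda^2\norm{x}^2$ near $0$, the corresponding one-step inequality $\E[\cosh(\lambda\norm{S_i})\mid\mathcal{F}_{i-1}] \le \cosh(\lambda\norm{S_{i-1}})\exp(c\lambda^2 a_i^2)$ holds with no singularity, and $\cosh(\lambda\norm{S_m}) \ge \tfrac12\exp(\lambda\norm{S_m})$ recovers the exponential moment. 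Combining the supermartingale bound with Markov, $\Pr[\norm{S_m}\ge\delta] \le e^{-\lambda\delta}\,\E[\exp(\lambda\norm{S_m})]$, and choosing $\lambda = \delta/(2c\sum_{i=1}^m a_i^2)$ produces the claimed $e^{-\Theta(\delta^2/\sum_{i=1}^m a_i^2)}$.

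As an alternative route I would mention the Doob-martingale reduction: the real-valued process $W_i = \E[\norm{S_m}\mid\mathcal{F}_i]$ is a martingale whose increments are controlled because replacing $d_i$ alters $\norm{S_m}$ by at most $2a_i$ (triangle inequality), so scalar Azuma applied to $W_i$ bounds the deviation of $\norm{S_m}$ from its mean; together with $\E\norm{S_m} \le (\sum_{i=1}^m a_i^2)^{1/2}$, which follows from orthogonality of martingale differences ($\E\norm{S_m}^2 = \sum_{i=1}^m \E\norm{d_i}^2$), this gives the same tail up to constants. The delicate point there is bounding the Doob increments $|W_i - W_{i-1}|$ under mere conditional centering rather than full independence of the $d_i$, which is why I would present the $\cosh$-supermartingale argument as the primary proof.
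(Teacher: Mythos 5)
The paper offers no proof of this theorem to compare against: it is quoted (in simplified form) from \cite{naor2012banach}, and the citation is the entire justification. Your sketch reconstructs the standard argument behind that result, namely Pinelis's $\cosh$-supermartingale device for $2$-smooth spaces: the one-step bound $\E[\cosh(\lambda\norm{S_i})\mid\mathcal{F}_{i-1}]\le \cosh(\lambda\norm{S_{i-1}})\,e^{c\lambda^{2}a_i^{2}}$, followed by Markov's inequality and optimization over $\lambda$, is exactly how the Hilbert-space case is proved in the literature, and your identification of the obstruction (the $1/\norm{x}$ singularity in the naive first-order expansion of $\norm{S_{i-1}+d_i}$) together with its resolution is accurate. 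You are also correct that the exponent in the paper's statement is missing a minus sign: as written the right-hand side is at least $1$ and the bound is vacuous, and the paper itself applies the theorem with the negative exponent in the proof of Lemma~\ref{lem:boundcenterdist}.
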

\begin{theorem}[\cite{hoeffding1994probability}]
\label{thm: hoeffding}
Let $Y_1,...,Y_m$ be independent random variables such that $\forall 1 \leq i \leq m, E[Y_i] = \mu$ and $Y_i \in [a_{min}, a_{max}]$. Then $Pr \left(\size{\frac{1}{m}\sum_{i=1}^m Y_k - \mu} \geq \delta \right) \leq 2e^{ -2m\delta^2/(a_{max}-a_{min})^2}$.
\end{theorem}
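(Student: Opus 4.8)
The plan is to prove the two-sided tail bound via the standard Chernoff (exponential moment) method, treating the upper and lower tails separately and combining them with a union bound to account for the factor of $2$. Writing $S = \sum_{i=1}^m Y_i$, for the upper tail I would fix a free parameter $s > 0$ and apply Markov's inequality to the nonnegative random variable $e^{s(S - m\mu)}$:
\[
\Pr\!\left(\tfrac{1}{m}S - \mu \geq \delta\right) = \Pr\!\left(e^{s(S-m\mu)} \geq e^{sm\delta}\right) \leq e^{-sm\delta}\, E\!\left[e^{s(S-m\mu)}\right].
\]
Since the $Y_i$ are independent, the expectation factorizes as $\prod_{i=1}^m E[e^{s(Y_i - \mu)}]$, which reduces the whole problem to bounding the moment generating function of a single centered, bounded random variable.

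The key step is Hoeffding's lemma: for a random variable $Z$ with $E[Z]=0$ and $Z \in [a,b]$, one has $E[e^{sZ}] \leq e^{s^2(b-a)^2/8}$. I would prove this by using convexity of $t \mapsto e^{st}$ to bound $e^{sZ}$ pointwise by the chord through the endpoints $a$ and $b$, then taking expectations so that the linear term vanishes thanks to $E[Z]=0$, and finally writing the resulting quantity as $e^{\psi(s)}$ for a cumulant-type function $\psi$. A second-order Taylor expansion of $\psi$ with Lagrange remainder, combined with the fact that $\psi''$ equals the variance of a $[a,b]$-valued random variable and is therefore at most $(b-a)^2/4$, yields the stated bound. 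Applying this with $Z_i = Y_i - \mu \in [a_{min}-\mu,\, a_{max}-\mu]$, an interval of width $a_{max}-a_{min}$, gives $E[e^{s(Y_i-\mu)}] \leq e^{s^2(a_{max}-a_{min})^2/8}$.

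Combining the two steps produces $\Pr(\tfrac1m S - \mu \geq \delta) \leq \exp\!\big(-sm\delta + m s^2 (a_{max}-a_{min})^2/8\big)$, valid for every $s>0$. I would then optimize the exponent over $s$; it is a quadratic in $s$ minimized at $s = 4\delta/(a_{max}-a_{min})^2$, which gives the exponent $-2m\delta^2/(a_{max}-a_{min})^2$. Running the identical argument with $-Y_i$ in place of $Y_i$ yields the matching lower-tail bound, and a union bound over the two tails introduces the factor of $2$ in the final statement.

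I expect the main obstacle to be the proof of Hoeffding's lemma itself — specifically, controlling the remainder of the Taylor expansion of the cumulant generating function uniformly in $s$, which is precisely where the constant $1/8$ (and hence the sharp coefficient $2$ in the exponent) originates. By comparison, the factorization via independence and the final scalar optimization over $s$ are routine.
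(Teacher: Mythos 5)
Your proof is correct: this is the classical Chernoff-bound argument via Hoeffding's lemma, with all constants handled properly --- the $1/8$ in the bound $E[e^{sZ}]\leq e^{s^2(b-a)^2/8}$, the optimizing choice $s = 4\delta/(a_{max}-a_{min})^2$, the resulting exponent $-2m\delta^2/(a_{max}-a_{min})^2$, and the factor of $2$ from the union bound over the two tails. The paper does not prove this statement at all --- it imports it as a cited classical result of Hoeffding --- and your argument is exactly the standard proof from that source, so there is nothing to reconcile.
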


The following lemma provides concentration guarantees when sampling a \emph{batch}.
\begin{lemma}
\label{lem: fB clost to fX}
Let $B$ be a tuple of $b$ elements chosen uniformly at random from $X$ with repetitions. For any fixed tuple of $k$ centers, $\Ccal\subseteq \Hcal$ which are a convex combination of $X$, it holds that: $Pr[\size{f_B(\Ccal) - f_X(\Ccal)} \geq \delta ] \leq 2e^{-b\delta^2 / 8\gamma^4}$.

\end{lemma}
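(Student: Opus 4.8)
The plan is to recognize $f_B(\Ccal)$ as an empirical average of $b$ independent, identically distributed, bounded random variables, and then to invoke the Hoeffding bound (Theorem~\ref{thm: hoeffding}) directly. Since $B=(y_1,\dots,y_b)$ is sampled uniformly from $X$ \emph{with repetitions}, the summands $Y_i := f_{y_i}(\Ccal)$ are i.i.d., and $f_B(\Ccal)=\frac{1}{b}\sum_{i=1}^b Y_i$ is exactly the empirical mean that Hoeffding's inequality controls. So the whole proof reduces to identifying the mean and the range of a single summand.

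First I would compute the common expectation. For one uniform draw $y_i$ from $X$,
$$E[Y_i] = \frac{1}{n}\sum_{x\in X} f_x(\Ccal) = f_X(\Ccal),$$
which matches the mean $\mu$ in Theorem~\ref{thm: hoeffding} with our target quantity. Next I would bound the range: trivially $f_{y_i}(\Ccal)=\min_{C\in\Ccal}\Delta(y_i,C)\geq 0$, and because every center in $\Ccal$ is a convex combination of $X$, Lemma~\ref{lem: goal func ub} gives $f_{y_i}(\Ccal)\leq 4\gamma^2$. Hence each $Y_i\in[0,4\gamma^2]$, so $a_{max}-a_{min}=4\gamma^2$.

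Plugging $m=b$ and $a_{max}-a_{min}=4\gamma^2$ into Theorem~\ref{thm: hoeffding} yields
$$Pr\left[\size{f_B(\Ccal)-f_X(\Ccal)}\geq\delta\right]\leq 2e^{-2b\delta^2/(4\gamma^2)^2}=2e^{-b\delta^2/8\gamma^4},$$
which is precisely the claimed bound. There is no genuine obstacle here: the only two points requiring care are the i.i.d.\ structure of the summands (guaranteed by sampling with repetitions, so that the draws are independent) and their uniform boundedness (which is exactly the content of Lemma~\ref{lem: goal func ub}). Everything else is a direct substitution into the stated inequality. The one subtlety worth flagging is that the lemma requires $\Ccal$ to be \emph{fixed} before the batch is drawn — the concentration statement is for a fixed set of centers, not for the (data-dependent) centers produced by the algorithm — so when this lemma is applied later one must be careful to condition appropriately on the current centers.
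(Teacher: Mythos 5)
Your proof is correct and follows essentially the same route as the paper: define the i.i.d.\ variables $Z_i = f_{y_i}(\Ccal)$, note their common mean is $f_X(\Ccal)$, bound their range by $4\gamma^2$ via Lemma~\ref{lem: goal func ub}, and substitute into Theorem~\ref{thm: hoeffding}. Your closing remark about $\Ccal$ needing to be fixed before the batch is drawn is also exactly the point the paper emphasizes when applying this lemma in the termination analysis.
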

\begin{proof}
Let us write $B=(y_1,\dots, y_b)$, where $y_i$ is a random element selected uniformly at random from $X$ with repetitions. For every such $y_i$ define the random variable $Z_i = f_{y_i}(\Ccal)$. These new random variables are IID for any fixed $\Ccal$. It also holds that $\forall i\in [b], E[Z_i] = \frac{1}{n}\sum_{x \in X}f_{x}(\Ccal) = f_X(\Ccal)$ and that $f_B(\Ccal) = \frac{1}{b} \sum_{x \in B}f_{x}(\Ccal) = \frac{1}{b} \sum_{i = 1}^b Z_i$.


Applying the Hoeffding bound (Theorem~\ref{thm: hoeffding}) with parameters $m=b,\mu = f_X(\Ccal), a_{max}-a_{min}\leq 4\gamma^2$ (due to Lemma~\ref{lem: goal func ub}) we get that: $
Pr[\size{f_B(\Ccal) - f_X(\Ccal)} \geq \delta] \leq 2e^{-b\delta^2 / 8\gamma^4}$.\qedhere
\end{proof}

For any tuple $S\subseteq X$ and some tuple of cluster centers $\Ccal=(\Ccal^\ell)_{\ell\in[k]} \subset \Hcal$, $\Ccal$ implies a \emph{partition} $(S^\ell)_{\ell\in [k]}$ of the points in $S$. Specifically, every $S^\ell$ contains the points in $S$ closest to $\Ccal^\ell$ (in $\Hcal$) and every point in $S$ belongs to a single $\Ccal^\ell$ (ties are broken arbitrarily). 
We state the following useful observation:
\begin{observation}
\label{obs: partition is opt}
Fix some $A\subseteq X$. Let $\Ccal$ be a tuple of $k$ centers, $S = (S^\ell)_{\ell \in [k]}$ be the partition of $A$ induced by $\Ccal$ and $\Sbar = (\Sbar^\ell)_{\ell \in [k]}$ be any other partition of $A$. It holds that $\sum_{j=1}^k \Delta(S^j, \Ccal^j) \leq \sum_{j=1}^k \Delta(\Sbar^j, \Ccal^j)$.
\end{observation}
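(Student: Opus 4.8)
The plan is to reduce the claim to a pointwise comparison over the elements of $A$. First I would unfold both sides using the definition $\Delta(T,\Ccal^j)=\sum_{y\in T}\Delta(y,\Ccal^j)$ and reorganize each double sum $\sum_{j}\sum_{y\in\,\cdot\,^j}$ into a single sum over $y\in A$, which is legitimate because each partition assigns every $y\in A$ to exactly one part. Concretely, for a partition $P=(P^\ell)_{\ell\in[k]}$ of $A$ let $\pi_P(y)$ denote the unique index with $y\in P^{\pi_P(y)}$; then $\sum_{j=1}^k\Delta(P^j,\Ccal^j)=\sum_{y\in A}\Delta(y,\Ccal^{\pi_P(y)})$. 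Applying this to both $S$ and $\Sbar$ rewrites the two quantities we wish to compare as sums of per-point charges.

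Next, I would invoke the defining property of the induced partition $S$: since $S^\ell$ consists of exactly those points of $A$ whose closest center (in $\Hcal$) is $\Ccal^\ell$, we have $\Delta(y,\Ccal^{\pi_S(y)})=\min_{\ell\in[k]}\Delta(y,\Ccal^\ell)$ for every $y\in A$. In particular, for the arbitrary partition $\Sbar$ and any $y\in A$ it holds that $\Delta(y,\Ccal^{\pi_S(y)})=\min_{\ell\in[k]}\Delta(y,\Ccal^\ell)\le\Delta(y,\Ccal^{\pi_{\Sbar}(y)})$, since $\pi_{\Sbar}(y)$ is just one particular index.

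Finally, I would sum this inequality over all $y\in A$ and translate back through the identity above, obtaining $\sum_{j=1}^k\Delta(S^j,\Ccal^j)=\sum_{y\in A}\min_{\ell}\Delta(y,\Ccal^\ell)\le\sum_{y\in A}\Delta(y,\Ccal^{\pi_{\Sbar}(y)})=\sum_{j=1}^k\Delta(\Sbar^j,\Ccal^j)$, which is exactly the desired inequality.

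There is no genuine obstacle here; the only point requiring care is the bookkeeping that turns a cost organized by cluster into a cost organized by point, after which the statement is an immediate consequence of the fact that $S$ charges each point to its nearest center and hence realizes the pointwise minimum. The arbitrary tie-breaking in the definition of $S$ is harmless, since any resolution of a tie still attains $\min_{\ell}\Delta(y,\Ccal^\ell)$ for the point in question.
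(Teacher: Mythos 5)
Your proof is correct: the paper states this as an observation with no written proof, and your argument --- reorganizing the cluster-wise sums into per-point charges and noting that the induced partition attains the pointwise minimum $\min_{\ell}\Delta(y,\Ccal^\ell)$ for every $y\in A$ --- is exactly the standard justification the paper implicitly relies on. The remark about arbitrary tie-breaking being harmless is a nice touch, since the paper's definition of the induced partition explicitly allows ties to be broken arbitrarily.
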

Recall that $\Ccal_i^j$ is the $j$-th center in the beginning of the $i$-th iteration of Algorithm \ref{alg: alg1} and $(B_i^\ell)_{\ell \in [k]}$ is the partition of $B_i$ induced by $\Ccal_i$. Let $(X_i^\ell)_{\ell \in [k]}$ be the partition of $X$ induced by $\Ccal_i$.

We now have the tools to analyze Algorithm~\ref{alg: alg1} with the learning rate of \cite{Schwartzman23}. Specifically, we assume that the algorithm executes for at least $t$ iterations, the learning rate is $\alpha_i^j = \sqrt{b_i^j / b}$, where $b_i^j = \size{B_i^j}$, and the batch size is $b = \bvalt$. We show that the algorithm must terminate within  $t = O(\gamma^2 / \eps)$ steps w.h.p. Plugging $t$ back into $b$, we get that a batch size of $b = \bval$ is sufficient. 
We assume that $\eps$ is chosen such that $\gamma^2/\eps > 1/4$. Otherwise, the stopping condition immediately holds due to Lemma~\ref{lem: goal func ub}.

\paragraph{Proof outline} 
We note that when sampling a batch it holds w.h.p that $f_{B_i}(\Ccal_i)$ is close to $f_{X_i}(\Ccal_i)$ (Lemma~\ref{lem: fB clost to fX}). This is due to the fact that $B_i$ is sampled after $\Ccal_i$ is fixed. If we could show that $f_{B_i}(\Ccal_{i+1})$ is close $ f_{X_i}(\Ccal_{i+1})$ then combined with the fact that we make progress of at least $\eps$ on the batch we can conclude that we make progress of at least some constant fraction of $\eps$ on the entire dataset.

Unfortunately, as $\Ccal_{i+1}$ depends on $B_i$, getting the above guarantee is tricky. To overcome this issue we define the auxiliary value $\Cbar_{i+1}^j = (1-\alpha_i^j)\Ccal_{i}^j + \alpha^j_i cm(X_i^j)$. This is the $j$-th center at step $i+1$ if we were to use the entire dataset for the update, rather than just a batch. Note that this is only used in the analysis and not in the algorithm. Note that $\Cbar_{i+1}$ only depends on $\Ccal_i$ and $X$ and is independent of $B_i$ (i.e., we can fix its value before sampling $B_i$).
As $\Cbar_{i+1}$ does not depend on $B_i$ we use $\Cbar_{i+1}$ instead of $\Ccal_{i+1}$ in the above analysis outline. We show that for our choice of learning rate it holds that $\Cbar_{i+1}, \Ccal_{i+1}$ are sufficiently close, which implies that $f_X(\Ccal_{i+1}), f_X(\Cbar_{i+1})$ and $f_{B_i}(\Ccal_{i+1}), f_{B_i}(\Cbar_{i+1})$ are also sufficiently close. That is, $\Cbar_{i+1}$ acts as a proxy for $\Ccal_{i+1}$. Combining everything together we get our desired result for Algorithm~\ref{alg: alg1}.

We start with the following useful observation, which will allow us to use Lemma~\ref{lem: convex bound norm} to bound the norm of the centers by $\gamma$ throughout the execution of the algorithm.

\begin{observation}
    If $\forall j \in [k], \Ccal_{1}^j$ is a convex combination of $X$ then $\forall i>1, j\in [k], \Ccal_{i}^j, \Cbar_{i}^j$ are also a convex combinations of $X$.
\end{observation}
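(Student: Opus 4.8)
The plan is to prove both statements simultaneously by induction on $i$, relying on two elementary closure properties of the set of convex combinations of $X$. First I would record that for any nonempty finite tuple $S\subseteq X$ the center of mass $cm(S)=\frac{1}{\size S}\sum_{x\in S}\phi(x)$ is itself a convex combination of $X$: it assigns each point the nonnegative weight (number of occurrences in $S$)$/\size S$, and these weights sum to one, with points of $X$ not appearing in $S$ simply receiving weight zero. Second, the convex combinations of $X$ form a convex set: if $u=\sum_{x\in X} p_x\phi(x)$ and $v=\sum_{x\in X} q_x\phi(x)$ are both convex combinations and $\lambda\in[0,1]$, then $\lambda u+(1-\lambda)v=\sum_{x\in X}(\lambda p_x+(1-\lambda)q_x)\phi(x)$ has nonnegative weights summing to one, hence is again a convex combination.

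With these two facts the induction is immediate. The base case $i=1$ is the hypothesis. For the inductive step, suppose every $\Ccal_i^j$ is a convex combination of $X$. Recall $\alpha_i^j=\sqrt{b_i^j/b}$ with $b_i^j=\size{B_i^j}\le b$, so $\alpha_i^j\in[0,1]$. The update $\Ccal_{i+1}^j=(1-\alpha_i^j)\Ccal_i^j+\alpha_i^j\,cm(B_i^j)$ is then a $\lambda$-weighted combination (with $\lambda=\alpha_i^j$) of the convex combination $\Ccal_i^j$ and the convex combination $cm(B_i^j)$, hence a convex combination of $X$ by the two facts above. The identical argument with $cm(X_i^j)$ in place of $cm(B_i^j)$ shows $\Cbar_{i+1}^j=(1-\alpha_i^j)\Ccal_i^j+\alpha_i^j\,cm(X_i^j)$ is a convex combination. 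Since $\Cbar_i$ depends only on $\Ccal_{i-1}$, which is a convex combination by the inductive hypothesis, the claim for $\Cbar_i$ follows for all $i>1$ as well.

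The only thing requiring care — and the closest thing to an obstacle — is the degenerate case of an empty cluster. If $b_i^j=0$ then $cm(B_i^j)$ is undefined, but in that case $\alpha_i^j=0$ and the update collapses to $\Ccal_{i+1}^j=\Ccal_i^j$, which is already a convex combination, so the undefined term never contributes. For the auxiliary center I would note that $B_i^j\subseteq X_i^j$, since a batch point assigned to cluster $j$ is assigned there in the full-data partition too (the assignment depends only on $\Ccal_i$); thus $b_i^j>0$ forces $X_i^j$ nonempty and $cm(X_i^j)$ well-defined, while $b_i^j=0$ again makes $\alpha_i^j=0$ kill the $cm(X_i^j)$ term. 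Beyond this bookkeeping the argument is purely the two closure facts, and the bound $\norm{\Ccal_i^j},\norm{\Cbar_i^j}\le\gamma$ that is ultimately wanted then follows directly from Lemma~\ref{lem: convex bound norm}.
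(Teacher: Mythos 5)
Your proof is correct: the paper states this as an observation with no explicit proof, treating it as immediate, and your induction via the two closure facts (centers of mass of tuples from $X$ are convex combinations, and convex combinations of convex combinations are again convex combinations, using $\alpha_i^j=\sqrt{b_i^j/b}\in[0,1]$) is precisely the argument being implicitly invoked. Your careful handling of the empty-cluster case $b_i^j=0$ is a detail the paper glosses over, and it is handled correctly.
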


Let us state the following useful lemma from \cite{KanungoMNPSW04}. Although their proof is for Euclidean spaces, it goes through for Hilbert spaces. We provide the proof in the appendix for completeness.
\begin{lemma}[\cite{KanungoMNPSW04}]
\label{lem: kanungo}
For any set $S \subseteq X$ and any $C\in \Hcal$ it holds that $\Delta(S, C) =  \Delta(S, cm(S)) + \size{S}\Delta(C, cm(S))$.
\end{lemma}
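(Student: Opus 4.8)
The plan is to prove Lemma~\ref{lem: kanungo}, the kernelized ``bias-variance'' decomposition $\Delta(S,C) = \Delta(S, cm(S)) + \size{S}\Delta(C, cm(S))$, by direct expansion of the squared norms and cancellation of the cross term. First I would unpack the definition: writing $\mu = cm(S) = \frac{1}{\size{S}}\sum_{y\in S}\phi(y)$, the left-hand side is $\Delta(S,C) = \sum_{y\in S}\norm{\phi(y)-C}^2$. The key algebraic trick is to insert and subtract $\mu$ inside each term, i.e. write $\phi(y)-C = (\phi(y)-\mu) + (\mu - C)$, and then expand the squared norm using the inner product.

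The main computation is then to expand
\begin{align*}
\Delta(S,C) = \sum_{y\in S}\norm{(\phi(y)-\mu)+(\mu-C)}^2 = \sum_{y\in S}\Big(\norm{\phi(y)-\mu}^2 + 2\innerprod{\phi(y)-\mu}{\mu-C} + \norm{\mu-C}^2\Big).
\end{align*}
The first summed term is exactly $\Delta(S, cm(S))$, and since $\mu - C$ does not depend on $y$, the third term sums to $\size{S}\norm{\mu-C}^2 = \size{S}\Delta(C, cm(S))$. The crux is showing the cross term vanishes: $\sum_{y\in S}\innerprod{\phi(y)-\mu}{\mu-C} = \innerprod{\sum_{y\in S}(\phi(y)-\mu)}{\mu-C}$, and the inner sum is $\big(\sum_{y\in S}\phi(y)\big) - \size{S}\mu = \size{S}\mu - \size{S}\mu = \vzero$ by the very definition of $\mu$ as the center of mass. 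Pulling the fixed vector $\mu - C$ out of the sum uses bilinearity of the inner product, which holds verbatim in any Hilbert space.

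I expect there to be essentially no serious obstacle here: this is the standard parallel-axis / centroid decomposition, and the only thing to be careful about is that all manipulations (bilinearity, pulling constants out of sums, the cancellation identity $\sum_{y\in S}\phi(y) = \size{S}\,cm(S)$) are stated in terms of inner products and norms in $\Hcal$ rather than coordinates, so nothing about finite dimension is used. This is precisely why the authors remark that the Euclidean proof ``goes through for Hilbert spaces'' — the argument never references a basis or coordinatewise expansion, only the inner-product structure, so replacing $\R^d$ by $\Hcal$ changes nothing. The whole proof is three or four lines of display math with the cross-term cancellation as the single substantive step.
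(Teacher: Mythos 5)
Your proof is correct and is essentially identical to the paper's: the same insertion of $cm(S)$ into each squared norm, expansion via the inner product, and cancellation of the cross term $\innerprod{\sum_{y\in S}(\phi(y)-cm(S))}{cm(S)-C}=0$ from the definition of the center of mass. The only cosmetic difference is that you write $\phi(y)$ explicitly where the paper, by its stated abuse of notation, writes the points of $S$ directly as elements of $\Hcal$.
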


We use the above to state the following useful lemma (proof is deferred to Appendix~\ref{appendix: termination}. 

\begin{lemma}
\label{lem: Delta diff up}
For any $S\subseteq X$ and $C,C' \in \Hcal$ which are convex combinations of $X$, it holds that: $\size{\Delta(S,C') - \Delta(S,C)} \leq 4\gamma\size{S}\norm{C-C'}$.
\end{lemma}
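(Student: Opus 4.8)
The plan is to reduce the claim to a per-point estimate and then sum. Since $\Delta(S,C) = \sum_{y\in S}\norm{\phi(y)-C}^2$ by definition, the difference splits as
\[
\Delta(S,C') - \Delta(S,C) = \sum_{y\in S}\left(\norm{\phi(y)-C'}^2 - \norm{\phi(y)-C}^2\right),
\]
so it suffices to bound each summand by $4\gamma\norm{C-C'}$ and invoke the triangle inequality over the $\size{S}$ terms.

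For a single point, I would use the factorization $\norm{a}^2 - \norm{b}^2 = \innerprod{a-b}{a+b}$ with $a = \phi(y)-C'$ and $b = \phi(y)-C$. This gives $a-b = C-C'$ and $a+b = 2\phi(y)-C-C'$, hence
\[
\norm{\phi(y)-C'}^2 - \norm{\phi(y)-C}^2 = \innerprod{C-C'}{2\phi(y)-C-C'}.
\]
Applying Cauchy–Schwarz bounds the absolute value of this by $\norm{C-C'}\cdot\norm{2\phi(y)-C-C'}$.

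It then remains to bound the second factor. By the triangle inequality $\norm{2\phi(y)-C-C'} \leq 2\norm{\phi(y)} + \norm{C} + \norm{C'}$. Here $\norm{\phi(y)}\leq\gamma$ holds by the definition of $\gamma$, and $\norm{C},\norm{C'}\leq\gamma$ by Lemma~\ref{lem: convex bound norm}, since $C,C'$ are convex combinations of $X$. This yields $\norm{2\phi(y)-C-C'}\leq 4\gamma$, so each summand is at most $4\gamma\norm{C-C'}$ in absolute value. Summing over the $\size{S}$ points and using the triangle inequality gives $\size{\Delta(S,C')-\Delta(S,C)}\leq 4\gamma\size{S}\norm{C-C'}$, as desired.

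There is no real obstacle here: the proof is a short chain of the polarization-type identity, Cauchy–Schwarz, and the norm bound from Lemma~\ref{lem: convex bound norm}. The only point requiring any care is ensuring both $C$ and $C'$ are genuinely convex combinations of $X$ so that Lemma~\ref{lem: convex bound norm} applies to each — which is exactly the hypothesis of the lemma — and tracking that the three norm contributions sum to the stated constant $4\gamma$ rather than a looser bound.
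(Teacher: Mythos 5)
Your proof is correct, but it takes a different route from the paper's. The paper first invokes Lemma~\ref{lem: kanungo} to collapse the whole sum at once: writing $\Delta(S,C) = \Delta(S,cm(S)) + \size{S}\Delta(cm(S),C)$ and likewise for $C'$, the common term $\Delta(S,cm(S))$ cancels, so $\size{\Delta(S,C')-\Delta(S,C)} = \size{S}\cdot\size{\Delta(cm(S),C')-\Delta(cm(S),C)}$ exactly; it then runs the same polarization-plus-Cauchy--Schwarz computation you do, but only once, at the single point $cm(S)$, bounding $\norm{2\,cm(S)-(C+C')}\leq 4\gamma$ via Lemma~\ref{lem: convex bound norm} (since $cm(S)$ is itself a convex combination of $X$). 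You instead apply the identity $\norm{a}^2-\norm{b}^2=\innerprod{a-b}{a+b}$ point-by-point at each $\phi(y)$, bound each summand by $4\gamma\norm{C-C'}$ using $\norm{\phi(y)}\leq\gamma$ directly, and sum. Both arguments yield the same constant $4\gamma$. What your version buys is self-containment: it needs only the definition of $\gamma$ and Lemma~\ref{lem: convex bound norm}, and since Lemma~\ref{lem: kanungo} is used nowhere else in the paper, your route would let the authors drop that imported lemma (and its appendix proof) entirely. What the paper's version buys is a cleaner intermediate statement --- an exact identity reducing the difference over $S$ to a difference at a single point --- at the cost of depending on the Kanungo identity.
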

We use the above to show that when centers are sufficiently close, their values are close for any $f_A$.
\begin{lemma}
\label{lem: bound func diff aux}
Fix some $A\subseteq X$ and let $(\Ccal^j)_{j\in[k]}, (\Cbar^j)_{j\in[k]} \subset \Hcal$ be arbitrary centers such that $\forall j\in[k], \norm{\Ccal^j - \Cbar^j} \leq \eps/28\gamma$. It holds w.h.p that $\forall i\in[t], \size{f_A(\Cbar_{i+1}) - f_A(\Ccal_{i+1})} \leq \eps / 7$.

\end{lemma}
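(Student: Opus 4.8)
The plan is to separate the statement into a purely deterministic core and a probabilistic wrapper. The deterministic core is: \emph{any} two $k$-tuples of centers that are pointwise within $\eps/28\gamma$ in norm induce $f_A$-values within $\eps/7$ of each other. The wrapper then instantiates this at the algorithm's centers $\Ccal_{i+1}$ and their full-data proxies $\Cbar_{i+1}$, for which the required closeness holds w.h.p.\ at every iteration.

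For the deterministic core, let $(S^j)_{j\in[k]}$ be the partition of $A$ induced by the centers $\Ccal$ and $(\Sbar^j)_{j\in[k]}$ the partition induced by $\Cbar$, so that $f_A(\Ccal)=\frac{1}{\size{A}}\sum_{j}\Delta(S^j,\Ccal^j)$ and $f_A(\Cbar)=\frac{1}{\size{A}}\sum_{j}\Delta(\Sbar^j,\Cbar^j)$. The obstacle is that the two cluster assignments need not agree, so I cannot compare the sums term-by-term. The way around this is Observation~\ref{obs: partition is opt}: since $(\Sbar^j)$ is the optimal partition of $A$ for the centers $\Cbar$, it holds that $\sum_j \Delta(\Sbar^j,\Cbar^j)\leq \sum_j \Delta(S^j,\Cbar^j)$, which lets me re-use the $\Ccal$-induced partition. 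I would then apply Lemma~\ref{lem: Delta diff up} cluster-by-cluster, $\size{\Delta(S^j,\Cbar^j)-\Delta(S^j,\Ccal^j)}\leq 4\gamma\size{S^j}\norm{\Ccal^j-\Cbar^j}\leq 4\gamma\size{S^j}\cdot\eps/(28\gamma)$, and sum using $\sum_j\size{S^j}=\size{A}$ to get $f_A(\Cbar)-f_A(\Ccal)\leq 4\eps/28=\eps/7$. The reverse inequality follows by the symmetric argument, now using optimality of $(S^j)$ for $\Ccal$ together with the $\Cbar$-induced partition $(\Sbar^j)$; combining the two directions yields $\size{f_A(\Cbar)-f_A(\Ccal)}\leq\eps/7$.

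Finally, to obtain the stated ``w.h.p., for all $i\in[t]$'' form, I would invoke the closeness bound $\norm{\Ccal_{i+1}^j-\Cbar_{i+1}^j}\leq\eps/28\gamma$ that holds for each fixed $i$ (this is where the concentration of the batch update $\Ccal_{i+1}$ around its full-data proxy $\Cbar_{i+1}$ enters, controlled via the Hilbert-space Azuma bound of Theorem~\ref{thm: azuma}), and take a union bound over the $t$ iterations so the closeness holds simultaneously for all $i\in[t]$. On this event the deterministic core applies at every iteration, giving the claim. I expect the main difficulty to be the asymmetry from the differing cluster assignments: a naive triangle-inequality comparison fails, and the clean resolution is precisely the two-sided optimality argument above, which must be executed in both directions to land the same $\eps/7$ bound.
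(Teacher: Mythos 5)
Your proposal is correct and follows essentially the same route as the paper's proof: the deterministic core (Observation~\ref{obs: partition is opt} to reuse the $\Ccal$-induced partition, then Lemma~\ref{lem: Delta diff up} cluster-by-cluster, summed via $\sum_j\size{S^j}=\size{A}$, with the symmetric argument for the reverse direction) is exactly the paper's argument. The probabilistic wrapper you describe is handled in the paper by the separate Lemmas~\ref{lem:boundcenterdist} and~\ref{lem: c-cbar bounds}, which instantiate this lemma at the algorithm's iterates, so nothing is missing.
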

\begin{proof}
Let $S = (S^\ell)_{\ell \in [k]},\Sbar = (\Sbar^\ell)_{\ell \in [k]}$ be the partitions induced by $\Ccal, \Cbar$ on $A$. Let us expand the expression
\begin{align*}
    &f_A(\Cbar) - f_A(\Ccal) = \frac{1}{\size{A}} \sum_{j=1}^k \Delta(\Sbar^j, \Cbar^j) - \Delta(S^j, \Ccal^j)
    \leq \frac{1}{\size{A}} \sum_{j=1}^k \Delta(S^j, \Cbar^j) - \Delta(S^j, \Ccal^j)
    \\&\leq \frac{1}{\size{A}} \sum_{j=1}^k 4\gamma\size{S^j}\norm{\Cbar^j - \Ccal^j}
    \leq \frac{1}{\size{A}} \sum_{j=1}^k \size{S^j}\eps / 7 = \eps / 7.
\end{align*}
Where the first inequality is due to Observation~\ref{obs: partition is opt}, the second is due Lemma~\ref{lem: Delta diff up} and finally we use the assumption about the distances between centers together with the fact that $\sum_{j=1}^k \size{S^j} = \size{A}$. Using the same argument we also get that $f_A(\Ccal) - f_A(\Cbar) \leq \eps /7$, which completes the proof.
\end{proof}

Now we show that due to our choice of learning rate, $\Ccal^j_{i+1}$ and $\Cbar^j_{i+1}$ are sufficiently close.
\begin{lemma}
\label{lem:boundcenterdist}
It holds w.h.p that $\forall i\in[t], j\in[k], \norm{\Ccal^j_{i+1} - \Cbar^j_{i+1}} \leq \frac{\epsilon}{28\gamma}$.
\end{lemma}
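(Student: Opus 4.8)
The plan is to exploit the fact that $\Ccal_{i+1}^j$ and $\Cbar_{i+1}^j$ are built from the \emph{same} previous center $\Ccal_i^j$ and differ only in whether the batch mean or the full cluster mean is used, combined with the precise form of the learning rate. Subtracting the two definitions immediately gives
\[
\Ccal_{i+1}^j - \Cbar_{i+1}^j = \alpha_i^j\left(cm(B_i^j) - cm(X_i^j)\right),
\]
so it suffices to show $\alpha_i^j\norm{cm(B_i^j) - cm(X_i^j)} \leq \eps/(28\gamma)$ with high probability for every $i\in[t], j\in[k]$. If $b_i^j = 0$ then $\alpha_i^j = 0$ and the two centers coincide, so I may assume $b_i^j \geq 1$ throughout.

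First I would fix the history up to $\Ccal_i$ (which is determined \emph{before} $B_i$ is sampled, so the induced partition and the tuple $X_i^j$ are fixed), and condition on the count $b_i^j = s$. Conditioned on this count, the $s$ points falling into $B_i^j$ are IID uniform samples from $X_i^j$, each with mean $cm(X_i^j)$ and with $\norm{\phi(x)}\leq\gamma$. Writing them $\phi(x_1),\dots,\phi(x_s)$, I form the Hilbert-space martingale $Y_0 = 0$, $Y_m = \sum_{l=1}^m\big(\phi(x_l) - cm(X_i^j)\big)$. Since $cm(X_i^j)$ is a convex combination of $X$, Lemma~\ref{lem: convex bound norm} gives $\norm{cm(X_i^j)}\leq\gamma$, so each increment satisfies $\norm{Y_m - Y_{m-1}}\leq 2\gamma$, and $cm(B_i^j) - cm(X_i^j) = \frac{1}{s}Y_s$.

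Next I would apply the Hilbert-space Azuma bound (Theorem~\ref{thm: azuma}). Using $\alpha_i^j = \sqrt{s/b}$ we have $\alpha_i^j\norm{cm(B_i^j) - cm(X_i^j)} = \frac{1}{\sqrt{sb}}\norm{Y_s}$, so the target is equivalent to $\norm{Y_s}\leq \frac{\eps}{28\gamma}\sqrt{sb}$. Taking $\delta = \frac{\eps}{28\gamma}\sqrt{sb}$ and $\sum_{l=1}^s a_l^2 = 4\gamma^2 s$, the Azuma exponent is $\delta^2/\sum a_l^2 = \Theta(\eps^2 b/\gamma^4)$, in which the random count $s$ \emph{cancels}. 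This is the crucial point: the $\sqrt{b_i^j/b}$ learning rate is exactly calibrated so that the deviation bound does not depend on how many points happen to land in cluster $j$. Hence the conditional failure probability is $e^{-\Theta(\eps^2 b/\gamma^4)}$ for every value of $s$, and therefore also unconditionally for a fixed pair $(i,j)$ and for every realization of the history.

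Finally I would take a union bound over all $i\in[t]$ and $j\in[k]$; because the per-step bound holds conditioned on the entire past, the dependence of $\Ccal_i$ on earlier batches causes no difficulty. Plugging in $b=\bvalt$ and using $k\leq n$ makes the exponent $\Theta(\log(nt))$, so the total failure probability over the $tk$ events is $(nt)^{-\Omega(1)}$, giving the claim w.h.p. I expect the main obstacle to be the conditioning step: verifying cleanly that conditioned on the count $b_i^j$ the assigned points are genuinely IID uniform on $X_i^j$, and that the martingale increment bound and the Azuma estimate go through in the possibly infinite-dimensional feature space. Once the cancellation of $s$ in the exponent is observed, the remaining probability estimate and union bound are routine.
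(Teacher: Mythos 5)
Your proposal is correct and follows essentially the same route as the paper's proof: the same decomposition $\Ccal^j_{i+1}-\Cbar^j_{i+1}=\alpha_i^j\left(cm(B_i^j)-cm(X_i^j)\right)$, conditioning on the cluster count, the same partial-sum martingale with increments bounded by $2\gamma$, the Hilbert-space Azuma bound of Theorem~\ref{thm: azuma}, and the observation that the choice $\alpha_i^j=\sqrt{b_i^j/b}$ makes the count cancel in the exponent, followed by a union bound over $i\in[t]$ and $j\in[k]$. Your write-up in fact makes explicit two points the paper leaves implicit (the law-of-total-probability justification for conditioning on the count, and the cancellation of $s$ being the reason for this particular learning rate), but the argument is the same.
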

\begin{proof} 
Note that $\Ccal^j_{i+1} - \Cbar^j_{i+1} = \alpha_i^j( cm(B^j_i) - cm(X^j_i))$.
Let us fix some iteration $i$ and center $j$. To simplify notation, let us denote: $X'=X_i^j,B'=B_i^j,b'=b_i^j,\alpha'=\alpha_i^j$.
Although $b'$ is a random variable, in what follows we treat it as a fixed value (essentially conditioning on its value). As what follows holds for \emph{all} values of $b'$ it also holds without conditioning due to the law of total probabilities.

For the rest of the proof, we assume $b'>0$ (if $b'=0$ the claim holds trivially).
Let us denote by $\set{Y_\ell}_{\ell=1}^{b'}$ the sampled points in $B'$. Note that a randomly sampled element from $X$ is in $B'$ if and only if it is in $X'$. As batch elements are sampled uniformly at random with repetitions from $X$, conditioning on the fact that an element is in $B'$ means that it is distributed uniformly over $X'$. Note that $\forall \ell, E[\phi(Y_\ell) ] = \frac{1}{\size{X'}}\sum_{x\in X'} \phi(x) = cm(X')$ and $E[cm(B')] = \frac{1}{b'}\sum_{\ell=1}^{b'}E[\phi(Y_\ell)] = cm(X')$.

Let us define the following martingale: $Z_r = \sum_{\ell=1}^{r}( \phi(Y_\ell) - E[\phi(Y_\ell)])$. Note that $Z_0 = 0$, and when $r>0$,
$Z_r = \sum_{\ell=1}^{r} \phi(Y_\ell) - r \cdot cm(X')$. It is easy to see that this is a martingale:
\[
E[Z_r \mid Z_{r-1}] = E[\sum_{\ell=1}^{r} \phi(Y_\ell) - r\cdot cm(X') \mid Z_{r-1}] = Z_{r-1} + E[\phi(Y_r) - cm(X') \mid Z_{r-1}] =Z_{r-1}.
\]
We bound the differences: $\norm{Z_r - Z_{r-1}} = \norm{\phi(Y_r) - cm(X')} \leq \norm{\phi(Y_r)} + \norm{cm(X')} \leq 2\gamma. $

Now we may use Azuma's inequality: $Pr[\norm{Z_{b'} - Z_0} \geq \delta] \leq e^{-\Theta(\frac{\delta^2}{\gamma^2 b'})} $.
Let us now divide both sides of the inequality by $b'$ and set $\delta = \frac{b'\eps}{28\gamma\alpha'}$. We get
\begin{align*}
    Pr[\norm{cm(B') - cm(X')} \geq \frac{\eps}{28\gamma\alpha'}]=Pr[\norm{\frac{1}{b'}\sum_{\ell=1}^{b'} \phi(Y_\ell) - cm(X')} \geq \frac{\eps}{28\gamma\alpha'}] \leq e^{-\Theta(\frac{b'\eps^2}{(\gamma \alpha')^2})}.
\end{align*}

Using the fact that $\alpha' = \sqrt{b' / b}$ together with the fact that $b = \bvalt$ (for an appropriate constant) we get that the above is $O(1/ntk)$. Finally, taking a union bound over all $t$ iterations and all $k$ centers per iteration completes the proof. \qedhere

\end{proof}

 Let us state the following useful lemma.
\begin{lemma}
\label{lem: c-cbar bounds}
It holds w.h.p that for every $i\in [t]$, 
\begin{center}
\begin{minipage}{.5\linewidth}
\begin{align}
    &f_{X}(\Cbar_{i+1}) - f_{X}(\Ccal_{i+1})\geq -\eps/7 \\
    &f_{B_i}(\Ccal_{i+1}) - f_{B_i}(\Cbar_{i+1})  \geq -\eps/7
\end{align}
\end{minipage}%
\begin{minipage}{.5\linewidth}
\begin{align}
    &f_{X}(\Ccal_{i}) - f_{B_i}(\Ccal_{i})  \geq -\eps/7 \label{ineq: XCiBiCi}\\
    &f_{B_i}(\Cbar_{i+1}) - f_{X}(\Cbar_{i+1})  \geq -\eps/7
\end{align}
\end{minipage}
\end{center}

\end{lemma}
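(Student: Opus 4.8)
The plan is to prove each of the four inequalities separately, observing that they fall into two distinct categories that require different tools. The first two inequalities relate $f_X$ and $f_{B_i}$ evaluated at the \emph{same} center (either $\Ccal_{i+1}$ and $\Cbar_{i+1}$, or $\Cbar_{i+1}$ against itself on different point sets), while the third relates $f_X$ and $f_{B_i}$ at $\Ccal_i$. First I would handle inequalities (1) and (2), which compare $f_A$ evaluated at $\Ccal_{i+1}$ versus $\Cbar_{i+1}$ for $A = X$ and $A = B_i$ respectively. These follow directly from Lemma~\ref{lem: bound func diff aux}: by Lemma~\ref{lem:boundcenterdist} we have w.h.p. that $\norm{\Ccal^j_{i+1} - \Cbar^j_{i+1}} \leq \eps/28\gamma$ for all $i\in[t], j\in[k]$, which is exactly the hypothesis required to invoke Lemma~\ref{lem: bound func diff aux} with $A = X$ and $A = B_i$. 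This yields $\size{f_A(\Cbar_{i+1}) - f_A(\Ccal_{i+1})} \leq \eps/7$, and taking the appropriate sign gives both (1) and (2).

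Next I would address inequalities \eqref{ineq: XCiBiCi} and (4), which are concentration statements comparing the batch objective $f_{B_i}$ to the full objective $f_X$ at a \emph{fixed} center. The crucial point here is the independence structure: $\Ccal_i$ is determined \emph{before} $B_i$ is sampled, and likewise $\Cbar_{i+1}$ depends only on $\Ccal_i$ and $X$ (as emphasized in the proof outline), so $\Cbar_{i+1}$ is also fixed before $B_i$ is drawn. This is precisely the setting of Lemma~\ref{lem: fB clost to fX}: for any center tuple that is fixed independently of the batch and is a convex combination of $X$, we have $Pr[\size{f_{B_i}(\Ccal) - f_X(\Ccal)} \geq \delta] \leq 2e^{-b\delta^2/8\gamma^4}$. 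Applying this with $\delta = \eps/7$ and the center $\Ccal_i$ gives \eqref{ineq: XCiBiCi}, and applying it with the center $\Cbar_{i+1}$ gives (4). The Observation preceding Lemma~\ref{lem: kanungo} guarantees that both $\Ccal_i$ and $\Cbar_{i+1}$ are convex combinations of $X$, so their norms are bounded and Lemma~\ref{lem: goal func ub} applies, validating the use of the concentration bound.

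Finally I would stitch these together with a union bound. For each of the four inequalities and each of the $t$ iterations (and $k$ centers where relevant), the failure probability is at most polynomially small—of order $O(1/(ntk))$—given the choice $b = \bvalt$. Taking a union bound over all $t$ iterations, all $k$ centers, and the constant number of inequality types keeps the total failure probability small, so all four inequalities hold simultaneously w.h.p. The main obstacle I anticipate is \textbf{correctly tracking the independence structure} so that Lemma~\ref{lem: fB clost to fX} may legitimately be applied: one must be careful that $\Cbar_{i+1}$ is genuinely independent of $B_i$ (which holds by construction, since it uses $cm(X_i^j)$ rather than $cm(B_i^j)$), whereas $\Ccal_{i+1}$ is \emph{not} independent of $B_i$—this is exactly why (1) and (2) route through the deterministic proximity Lemma~\ref{lem: bound func diff aux} rather than through concentration, and why $\Cbar_{i+1}$ serves as the essential proxy. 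Keeping the choice of the slack parameter $\eps/7$ consistent across all applications (so the bounds later combine additively into the desired per-iteration progress of a constant fraction of $\eps$) is the other bookkeeping detail to watch.
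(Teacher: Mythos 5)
Your proposal is correct and matches the paper's proof: the first two inequalities follow from Lemma~\ref{lem: bound func diff aux} (whose hypothesis is supplied w.h.p. by Lemma~\ref{lem:boundcenterdist}), the last two from Lemma~\ref{lem: fB clost to fX} with $\delta = \eps/7$ applied to the batch-independent centers $\Ccal_i$ and $\Cbar_{i+1}$, followed by a union bound over iterations. Your explicit discussion of the independence structure (why $\Cbar_{i+1}$ admits the concentration bound while $\Ccal_{i+1}$ must route through the proximity argument) is exactly the reasoning the paper's proof outline relies on.
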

\begin{proof}
The first two inequalities follow from Lemma~\ref{lem: bound func diff aux}. The last two are due to Lemma~\ref{lem: fB clost to fX} by setting $\delta = \eps/7$, $B=B_i$:
\[
Pr[\size{f_{B_i}(\Ccal) - f_X(\Ccal)} \geq \delta]  \leq 2e^{-b\delta^2 / 8\gamma^4} = e^{-\Theta(b\eps^2 /\gamma^4)} = e^{-\Omega(\log (nt))}  =  O(1/nt).
\]
Where the last inequality is due to the fact that $b = \bvalt$ (for an appropriate constant). The above holds for either $\Ccal=\Ccal_i$ or $\Ccal = \Cbar_{i+1}$. Taking a union bound over all $t$ iterations we get the desired result. 
\end{proof}

\paragraph{Putting everything together} We wish to lower bound $f_X(\Ccal_i) - f_X(\Ccal_{i+1})$. We write the following, where the $\pm$ notation means we add and subtract a term:
\begin{align*}
    &f_X(\Ccal_i) - f_X(\Ccal_{i+1}) = f_X(\Ccal_i) \pm f_{B_i}(\Ccal_i) - f_X(\Ccal_{i+1}) 
    \\& \geq f_{B_i}(\Ccal_i) - f_X(\Ccal_{i+1}) - \epsilon/7
     = f_{B_i}(\Ccal_i) \pm f_{B_i}(\Ccal_{i+1}) - f_X(\Ccal_{i+1}) - \epsilon/7
    \\& \geq f_{B_i}(\Ccal_{i+1}) - f_X(\Ccal_{i+1}) + 6\eps / 7
    \\& = f_{B_i}(\Ccal_{i+1}) \pm f_{B_i}(\Cbar_{i+1}) \pm f_{X}(\Cbar_{i+1}) - f_X(\Ccal_{i+1})  + 6\eps / 7 \geq 3\eps/7.
\end{align*}
Where the first inequality is due to inequality \ref{ineq: XCiBiCi} in Lemma~\ref{lem: c-cbar bounds} ($f_{X}(\Ccal_{i}) - f_{B_i}(\Ccal_{i})  \geq -\eps/7$), the second is due to the stopping condition of the algorithm ($f_{B_i}(\Ccal_i) - f_{B_i}(\Ccal_{i+1}) > \eps$), and the last is due to the remaining inequalities in Lemma~\ref{lem: c-cbar bounds}. The above holds w.h.p over all of the iterations of the algorithms. Using these guarantees for Algorithm~\ref{alg: alg1} we can easily derive our main result for the truncated version.

\paragraph{Truncated termination} 
Using Lemma~\ref{lem: truncated dist} together with Lemma~\ref{lem: bound func diff aux} and the fact that $f_X(\Ccal_i) - f_X(\Ccal_{i+1}) \geq 3\eps/7$ we get that: $f_X(\widehat{\Ccal}_i) - f_X(\widehat{\Ccal}_{i+1}) \geq f_X(\Ccal_i) - f_X(\Ccal_{i+1}) - 2\eps/7 \geq \eps/7$.
 We conclude that when $b=\bval$, w.h.p. Algorithm~\ref{alg: alg2} terminates within $t = O(\gamma^2/\eps)$ iterations. This completes the second claim of Theorem \ref{theorem:mainthm}. The final claim of Theorem \ref{theorem:mainthm} is due to the following lemma (proof deferred to Appendix~\ref{appendix: termination}).

\begin{lemma}
\label{lem: approx}
    The expected approximation ratio of the solution returned by Algorithm~\ref{alg: alg2} is at least the approximation guarantee of the initial centers provided to the algorithm.
\end{lemma}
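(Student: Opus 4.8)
The plan is to show that, with high probability over the batch sampling, a run of Algorithm~\ref{alg: alg2} never \emph{increases} the objective $f_X$, so its output is at least as good as the centers it was initialized with; the expected $O(\log k)$ bound of kernel $k$-means++ then transfers directly. The key observation is that monotonicity of $f_X$ along the trajectory is essentially already established in the termination analysis, so the remaining work is purely to package it as a statement about the \emph{expected} cost.

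First I would recall the per-iteration decrease. The ``Putting everything together'' computation shows that w.h.p.\ over all $i\in[t]$ we have $f_X(\Ccal_i)-f_X(\Ccal_{i+1})\ge 3\eps/7$, and the ``Truncated termination'' step upgrades this to $f_X(\widehat{\Ccal}_i)-f_X(\widehat{\Ccal}_{i+1})\ge \eps/7>0$ for the truncated centers. Let $\mathcal{E}$ be the high-probability event on which all of these per-iteration bounds hold simultaneously; by the union bounds behind Lemma~\ref{lem: c-cbar bounds} and Lemma~\ref{lem:boundcenterdist} we have $\Pr[\bar{\mathcal{E}}]\le p$ for some inverse-polynomial $p$. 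On $\mathcal{E}$ the sequence $f_X(\widehat{\Ccal}_1),f_X(\widehat{\Ccal}_2),\dots$ is strictly decreasing until termination, which, since $f_X\ge 0$ and (by Lemma~\ref{lem: goal func ub}) $f_X(\widehat{\Ccal}_1)\le 4\gamma^2$, must happen within $t=O(\gamma^2/\eps)$ steps. Hence on $\mathcal{E}$ the returned centers $\widehat{\Ccal}_{\mathrm{out}}$ satisfy $f_X(\widehat{\Ccal}_{\mathrm{out}})\le f_X(\widehat{\Ccal}_1)=f_X(\Ccal_1)$, the cost of the supplied initialization.

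Next I would convert this into a bound on the expected cost. Every center produced by the algorithm is a convex combination of $X$, so Lemma~\ref{lem: goal func ub} gives $f_X(\widehat{\Ccal}_{\mathrm{out}})\le 4\gamma^2$ unconditionally, which controls the contribution of $\bar{\mathcal{E}}$. Since $\mathcal{E}$ is an event over the batch randomness that holds w.h.p.\ for \emph{any} fixed convex-combination initialization, I can condition on $\Ccal_1$ and write
\[
\E_{\mathrm{batch}}\!\left[f_X(\widehat{\Ccal}_{\mathrm{out}})\mid \Ccal_1\right]\ \le\ f_X(\Ccal_1)\cdot\Pr[\mathcal{E}] + 4\gamma^2\cdot\Pr[\bar{\mathcal{E}}]\ \le\ f_X(\Ccal_1) + 4\gamma^2 p .
\]
Taking expectation over the $k$-means++ initialization and invoking its guarantee $\E[f_X(\Ccal_1)]\le 8(\ln k+2)\,\mathrm{OPT}$ yields $\E[f_X(\widehat{\Ccal}_{\mathrm{out}})]\le 8(\ln k+2)\,\mathrm{OPT}+4\gamma^2 p$, so the output inherits the initialization's approximation guarantee up to the additive slack $4\gamma^2 p$.

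The one genuine obstacle is making that failure term negligible relative to $\mathrm{OPT}$. Since $p$ is inverse-polynomial and can be driven below any fixed inverse polynomial by enlarging the constant in the batch size, $4\gamma^2 p$ is vanishingly small, but it must formally be absorbed into the $O(\log k)$ bound. I would handle this either by stating the conclusion up to an arbitrarily small additive accuracy, or by observing that in the nondegenerate regime $\mathrm{OPT}=\Omega(\gamma^2/\mathrm{poly}(n))$ the term is dominated outright. Everything else is bookkeeping: the non-increase of $f_X$ in expectation is exactly the content proven for the termination guarantee, and once the objective is shown to be non-increasing the transfer of the $k$-means++ bound is immediate.
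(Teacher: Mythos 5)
Your proposal follows the paper's high-level plan (turn the high-probability per-iteration decrease into a statement about expected cost, then invoke the $k$-means++ guarantee), but it diverges exactly at the point you yourself flag as ``the one genuine obstacle,'' and that obstacle is a real gap rather than bookkeeping. Conditioning on one global success event $\mathcal{E}$ and charging the failure event with the worst-case bound $4\gamma^2$ leaves the additive error $4\gamma^2\Pr[\bar{\mathcal{E}}]$ in $E[f_X(\widehat{\Ccal}_{\mathrm{out}})]\le f_X(\Ccal_1)+4\gamma^2\Pr[\bar{\mathcal{E}}]$, and no choice of inverse-polynomial failure probability makes this term absorbable into a \emph{multiplicative} $O(\log k)$ guarantee: $\mathrm{OPT}$ can be arbitrarily small, even zero (e.g.\ when the data occupies at most $k$ distinct points in feature space, where $k$-means++ returns a zero-cost initialization with probability one), in which case your bound gives an unbounded ratio while the lemma asserts the ratio is preserved. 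Both of your proposed remedies---stating the conclusion up to an additive accuracy, or assuming $\mathrm{OPT}=\Omega(\gamma^2/\mathrm{poly}(n))$---prove a strictly weaker statement than the lemma, which is unconditional. Relatedly, your closing claim that ``the non-increase of $f_X$ in expectation is exactly the content proven for the termination guarantee'' is not accurate: the termination analysis proves a high-probability decrease, and converting high probability into expectation without paying an additive $\gamma^2$ term is precisely the step that is missing.

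The paper closes this gap by doing the expectation argument \emph{per iteration} instead of globally. Writing $Z=f_X(\Ccal_i)-f_X(\Ccal_{i+1})$, a single iteration fails with probability only $O(\eps/(n\gamma^2))$ --- the total failure probability $O(1/n)$ spread over the $t=\Theta(\gamma^2/\eps)$ iterations --- while a successful iteration decreases $f_X$ by at least $\eps/7$. Hence $E[Z]\ \ge\ p\cdot\eps/7\;-\;4\gamma^2(1-p)\ =\ (1-O(1/n))\,\eps/7\;-\;O(\eps/n)\ >\ 0$: the $\gamma^2$ in the worst-case loss is cancelled by the $\gamma^{-2}$ in the per-iteration failure probability, so both terms are proportional to $\eps$ and the comparison reduces to $1/7$ versus $O(1/n)$, with $\mathrm{OPT}$ playing no role whatsoever. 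This yields $E[f_X(\Ccal_{i+1})]\le E[f_X(\Ccal_i)]$ for every iteration, and chaining over iterations gives $E[f_X(\text{output})]\le E[f_X(\Ccal_1)]$ with no additive slack, so the $k$-means++ bound transfers intact. In short, the missing idea is this per-iteration balancing (a supermartingale-style argument); a global union bound plus worst-case charging structurally cannot recover the multiplicative statement.
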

\section{Experiments} \label{experiments}
We evaluate our mini-batch algorithms on the following datasets:

\textbf{MNIST:} The MNIST dataset \cite{lecun1998mnist} has 70,000 grayscale images of handwritten digits (0 to 9), each image being 28x28 pixels. When flattened, this gives 784 features. \textbf{PenDigits:}  The PenDigits dataset \cite{pendigits} has 10992 instances, each represented by an 16-dimensional vector derived from 2D pen movements. The dataset has 10 labelled clusters, one for each digit. \textbf{Letters:} The Letters dataset \cite{ letter} has 20,000 instances of letters from `A' to `Z', each represented by 16 features. The dataset has 26 labelled clusters, one for each letter. \textbf{HAR: } The HAR dataset \cite{anguita2013public} has 10,299 instances collected from smartphone sensors, capturing human activities like walking, sitting, and standing. Each instance is described by 561 features. The dataset has 6 labelled clusters, corresponding to different types of physical activities.






We compare the following algorithms: full-batch kernel k-means, truncated mini-batch kernel k-means, and mini-batch k-means (both kernel and non-kernel) with learning rates from \cite{Schwartzman23} and sklearn. 
We evaluate our results with batch sizes: 2048, 1024, 512, 256 and $\tau = 50,100,200,300$. We execute every algorithm for 200 iterations. 
For the results below,  we apply the Gaussian kernel: $K(x,y)=e^{-\norm{x-y}^2/\kappa}$, where the $\kappa$ parameter is set using the heuristic of \cite{wang2019scalable} followed by some manual tuning (exact values appear in the supplementary materials). We also run experiments with the heat kernel and knn kernels.
We repeat every experiment 10 times and present the average Adjusted Rand Index (ARI) \cite{gates2017impact,rand1971objective} and Normalized Mutual Information (NMI) \cite{lancichinetti2009detecting} scores for every dataset. All experiments were conducted using an AMD Ryzen 9 7950X CPU with 128GB of RAM and a Nvidia GeForce RTX 4090 GPU.
We present partial results in Figure~\ref{fig:results_partial} and the full results in Appendix \ref{appendix:experiments}. Error bars in the plot measure the standard deviation.
\begin{figure}[h]
    \centering
    \includegraphics[width=\linewidth]{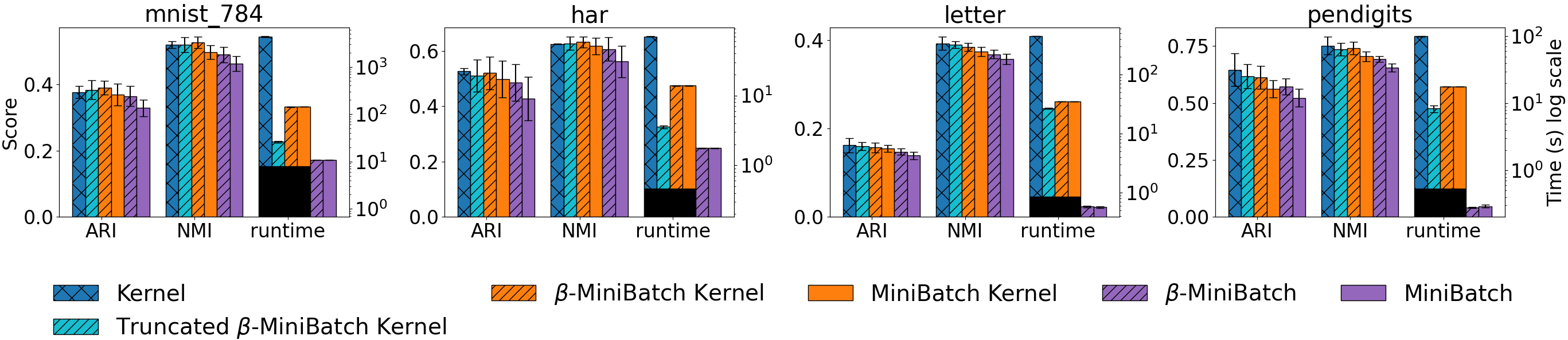}
    \caption{Our results for a batch size of size 1024 and $\tau = 200$ using the Gaussian kernel. We use the $\beta$ prefix to denote the algorithm uses the learning rate of \cite{Schwartzman23}. Black denotes the time required to compute the kernel.}
    \label{fig:results_partial}
\end{figure}

\paragraph{Discussion} Throughout our results we consistently observe that the truncated version achieves performance on par with the non-truncated version with a running time which is often an order of magnitude faster. Surprisingly, this often holds for tiny values of $\tau$ (e.g., 50) far below the theoretical threshold (i.e., $\tau \ll b$). We note that the analysis of our truncated algorithm relies heavily on the learning rate of \cite{Schwartzman23}, which does not go to 0 (unlike that of sklearn), this essentially exponentially decays the contribution of points to their centers over time, while the learning rate of sklearn does not.






\small

\bibliographystyle{iclr2025_conference}
\bibliography{references.bib}
\appendix
\section{Omitted proofs and Algorithms for Section~\ref{sec: alg}}
\label{appendix: alg}

\paragraph{Runtime analysis of Algorithm~\ref{alg: alg1}}
Assuming that $\innerprod{\Ccal_{i}^j}{\Ccal_{i}^j}$ and $ \innerprod{\phi(x)}{\Ccal_{i}^j}$ are known for all $j\in [k]$ and for all $x\in X$, we can compute $\innerprod{\Ccal_{i+1}^j}{\Ccal_{i+1}^j}$ and $\innerprod{\phi(x)}{\Ccal_{i+1}^j}$ for all $j\in [k]$ and $x\in X$, which implies we can compute the distances from any point in the batch to all centers. 

We now bound the running time of a single iteration of the outer loop in Algorithm~\ref{alg: alg1}.
Let us denote $b_i^j = \size{B_i^j}$ and recall that $cm(B_i^j) = \frac{1}{b_i^j}\sum_{y \in B_i^j}\phi(y)$. Therefore, computing $\innerprod{\phi(x)}{cm(B_i^j)} = \frac{1}{b_i^j}\sum_{y \in B_i^j}\innerprod{\phi(x)}{\phi(y)}$ requires $O(b_i^j)$ time. Similarly, computing $\innerprod{cm(B_i^j)}{cm(B_i^j)}$ requires $O((b_i^j)^2)$ time. Let us now bound the time it requires to compute $\innerprod{\phi(x)}{\Ccal_{i+1}^j}$ and $\innerprod{\Ccal_{i+1}^j}{\Ccal_{i+1}^j}$.

Assuming we know $\innerprod{\phi(x)}{\Ccal_{i}^j}$ and $\innerprod{\Ccal_{i}^j}{\Ccal_{i}^j}$, updating $\innerprod{\phi(x)}{\Ccal_{i+1}^j}$ for all $x\in X, j\in [k]$ requires $O(n(b+k))$ time. Specifically, the $\innerprod{\phi(x)}{\Ccal_{i}^j}$ term is already known from the previous iteration and we need to compute $\alpha^j_i\innerprod{\phi(x)}{cm(B_i^j)}$ for every $x\in X, j\in[k]$ which requires $n\sum_{j\in[k]} b_i^j=nb$ time. Finally, updating $\innerprod{\phi(x)}{\Ccal_{i+1}^j}$ for all $x\in X, j\in [k]$ requires $O(nk)$ time.


Updating $\innerprod{\Ccal_{i+1}^j}{\Ccal_{i+1}^j}$ requires $O(b^2 + kb)$ time. Specifically, $\innerprod{\Ccal_{i}^j}{\Ccal_{i}^j}$ is known from the previous iteration and computing $\innerprod{cm(B_i^j)}{cm(B_i^j)}$ for all $j\in[k]$ requires $O(\sum_{j\in [k]} (b_i^j)^2) =O(b^2)$ time. Computing $\innerprod{\Ccal_{i}^j}{cm(B_i^j)}$ for all $j\in [k]$ requires time $O(b)$ using $\innerprod{\phi(x)}{\Ccal_{i}^j}$ from the previous iteration. Therefore, the total running time of the update step (assigning points to new centers) is $O(n(b+k))$. To perform the update at the $(i+1)$-th step we only need $\innerprod{\phi(x)}{\Ccal_{i}^j}, \innerprod{\Ccal_{i}^j}{\Ccal_{i}^j}$, which results in a space complexity of $O(nk)$. This completes the first claim of Theorem \ref{theorem:mainthm}.

\paragraph{Truncated mini-batch algorithm}

\RestyleAlgo{boxruled}
\LinesNumbered
\begin{algorithm}[htbp]
	\DontPrintSemicolon
	\caption{Truncated Mini-batch kernel $k$-means with early stopping}
	\label{alg: alg2}
    
	\For{$i=1$ to $\infty$}
	{
	    Sample $b$ elements, $B_i=(y_1, \dots, y_b)$, uniformly at random from $X$ (with repetitions)\\
        \For{$j=1$ to $k$}{
            $B_i^j = \set{x\in B_i \mid \arg \min_{\ell \in [k]} \Delta(x, \widehat{\Ccal}_i^\ell) = j}$\\
            $\alpha_i^j$ is the learning rate for the $j$-th cluster for iteration $i$\\
        
        $\widehat{\Ccal}_{i+1}^j = \sum_{\ell \in Q_i^j} \alpha_{\ell}^j cm(B_{\ell}^j)\prod_{\ell\in Q_i^j} (1 - \alpha_{\ell}^j)$\\
        \lIf{$Q_i^j = 1$}
        {
        $\widehat{\Ccal}_{i+1}^j = \widehat{\Ccal}_{i+1}^j+  \Ccal_{1}^j\Pi_{\ell \in Q_i^j \setminus \set{i}} (1-\alpha_{\ell}^j)$
        }
        }
        
     \lIf{$f_{B_i}(\widehat{\Ccal}_{i+1}) - f_{B_i}(\widehat{\Ccal}_{i}) < \eps$}
	    {
	        Return $\widehat{\Ccal}_{i+1}$
	    }
	}
\end{algorithm}

\section{Omitted proofs and Algorithms for Section~\ref{sec: termination}}
\label{appendix: termination}
\paragraph{Proof of Lemma~\ref{lem: kanungo}} \begin{proof}
    \begin{align*}
        &\Delta(S,C) = \sum_{x\in S} \Delta(x, C) = \sum_{x\in S} \innerprod{x-C}{x-C}
        \\& = \sum_{x\in S} \innerprod{(x-cm(S))+(cm(S)-C)}{(x-cm(S))+(cm(S)-C)}
        \\& = \sum_{x\in S}\Delta(x, cm(S)) + \Delta(C, cm(S)) + 2 \innerprod{x-cm(S)}{cm(S) - C} 
        \\&= \Delta(S, cm(S))+ \size{S}\Delta(C, cm(S)) + \sum_{x\in S}2 \innerprod{x-cm(S)}{cm(S) - C} 
        \\&=\Delta(S, cm(S))+ \size{S}\Delta(C, cm(S)),
    \end{align*}
    where the last step is due to the fact that
    \begin{align*}
        &\sum_{x\in S}\innerprod{x-cm(S)}{cm(S) - C} = 
        \innerprod{\sum_{x\in S}x-\size{S}cm(S)}{cm(S) - C}
        \\&=\innerprod{\sum_{x\in S}x-\frac{\size{S}}{\size{S}}\sum_{x\in S}x}{cm(S) - C} = 0.
    \end{align*}
\end{proof}

\paragraph{Proof of Lemma~\ref{lem: Delta diff up}}
\begin{proof}
Using Lemma~\ref{lem: kanungo} we get that $\Delta(S,C) = \Delta(S,cm(S)) + \size{S}\Delta(cm(S),C)$ and that $\Delta(S,C') = \Delta(S,cm(S)) + \size{S}\Delta(cm(S),C')$. Thus, it holds that $\size{\Delta(S,C') - \Delta(S,C)} = \size{S}\cdot \size{\Delta(cm(S),C') - \Delta(cm(S),C)}$. Let us write
\begin{align*}
&\size{\Delta(cm(S),C') - \Delta(cm(S),C) } \\
&= \size{\innerprod{cm(S)-C'}{cm(S)-C'} - \innerprod{cm(S)-C}{cm(S)-C} } \\
&= \size{-2\innerprod{cm(S)}{C'} + \innerprod{C'}{C'} + 2\innerprod{cm(S)}{C} - \innerprod{C}{C} } \\
&= \size{ 2\innerprod{cm(S)}{C-C'} + \innerprod{C'-C}{C'+C}} \\
&=\size{\innerprod{C-C'}{2cm(S) - (C'+C)}} \\
&\leq \norm{C-C'}\norm{2cm(S) - (C'+C)} \leq 4\gamma\norm{C-C'}.
\end{align*}

Where in the last transition we used the Cauchy-Schwartz inequality, the triangle inequality, and the fact that $C,C',cm(S)$ are convex combinations of $X$ and therefore their norm is bounded by $\gamma$.
\end{proof}
\paragraph{Proof of Lemma~\ref{lem: approx}}
\begin{proof}
     Let $p=1- O(\eps/n\gamma^2) = 1-O(1/n)$ be the success probability of a single iteration. By ``success" we mean that all inequalities in Lemma~\ref{lem: c-cbar bounds} hold. The value of $p$ is due to the fact that we take $t=O(\gamma^2 / \eps)$ and that $\gamma^2 / \eps \geq 1/4$.
     
     With probability at least $p$, it holds that $f_X(\Ccal_{i+1}) \leq f_X(\Ccal_i) - 2\eps/7$. On the other hand, $f_X$ is upper bounded by $4\gamma^2$.
    Let us denote $Z = f_X(\Ccal_{i}) - f_X(\Ccal_{i+1})$ the change in the goal function after the $i$-th iteration. Consider the following:
    \begin{align*}
        E[Z] = E[Z \mid Z \geq \eps/7]Pr[Z \geq \eps/7] + E[Z \mid Z < \eps/7]Pr[Z < \eps/7]
    \end{align*}
    We show that $E[Z] = E[f_X(\Ccal_{i}) - f_X(\Ccal_{i+1})] \geq 0$ which implies that $E[f_X(\Ccal_{i+1})] \leq  E[f_X(\Ccal_{i})]$ and completes the proof. Note that if $E[Z \mid Z < \eps/7] > 0$ then we are done as we simply have a linear combination of two positive terms which is greater than 0. Let us focus on the case where $E[Z \mid Z < \eps/7] <0$.
    \begin{align*}
        &E[Z] = E[Z \mid Z \geq \eps/7]Pr[Z \geq \eps/7] + E[Z \mid Z < \eps/7]Pr[Z < \eps/7]
        \\& \geq p\eps/7 + E[Z \mid Z < \eps/7](1-p) \geq p\eps/7 - 4\gamma^2(1-p) 
        \\&= (1-O(1/n))\eps/7 - 4\gamma^2 O(\eps/\gamma^2n) =(1-O(1/n))\eps/7 -  O(\eps/n) >0
    \end{align*}
Where the first inequality is due to the definition of $p$ and the fact that $E[Z \mid Z < \eps/7] <0$, the second is due to the upper bound on $f_X$, and the last inequality is by assuming $n$ is sufficiently large.

\end{proof}
\section{Full experimental results}
\label{appendix:experiments}
We list our full experimental results in this section. We use the $\beta$ prefix to denote that the algorithm uses the learning rate of \cite{Schwartzman23}. $\tau$ denotes the maximum number of data points used to represent each truncated cluster center. We investigate 3 kernel functions: 1) The Gaussian kernel, as presented in Section \ref{experiments}, 2) The k-nearest-neighbor (k-nn) kernel, where the kernel matrix is $D^{-1}AD^{-1}$, $A$ is a k-nn adjacency matrix of the data and $D$ is the corresponding degree matrix, and 3) the heat kernel \cite{chung1997spectral} where the kernel matrix is $\exp(-tD^{-1/2}AD^{-1/2})$ for some $0<t<\infty$, $A$ is a k-nn adjacency matrix and $D$ is the corresponding degree matrix. All parameter settings can be found in the supplementary material. 

Unlike for the Gaussian kernel where $\gamma=1$; We observe empirically that for both the k-nn and heat kernels, $\gamma\ll 1$. In this case, the dependence on $\max\{\gamma^4,\gamma^2\}$ in the batch size required for Theorem \ref{theorem:mainthm} actually helps us. We found the parameters for these kernels to be easier to tune in practise than the Gaussian kernel parameter $\sigma$. For each kernel, we recorded the empirical value of gamma as follows:

\begin{table}[ht]
\centering
\begin{tabular}{|l|l|l|}
\hline
\textbf{Dataset}      & \textbf{Kernel Type} & \textbf{$\gamma$} \\ \hline
pendigits             & knn                 & 0.00100        \\ \hline
pendigits             & heat                & 0.0477         \\ \hline
pendigits             & gaussian            & 1             \\ \hline
har                   & knn                 & 0.000500       \\ \hline
har                   & heat                & 0.0468         \\ \hline
har                   & gaussian            & 1             \\ \hline
mnist\_784            & knn                 & 0.00220        \\ \hline
mnist\_784            & heat                & 0.0612         \\ \hline
mnist\_784            & gaussian            & 1          \\ \hline
letter                & knn                 & 0.00100        \\ \hline
letter                & heat                & 0.0399         \\ \hline
letter                & gaussian            & 1          \\ \hline
\end{tabular}
\caption{$\gamma$ values for various datasets and kernel types, rounded to 3 significant figures.}
\end{table}

\begin{figure}[h]
    \centering
    \includegraphics[width=0.75\linewidth]{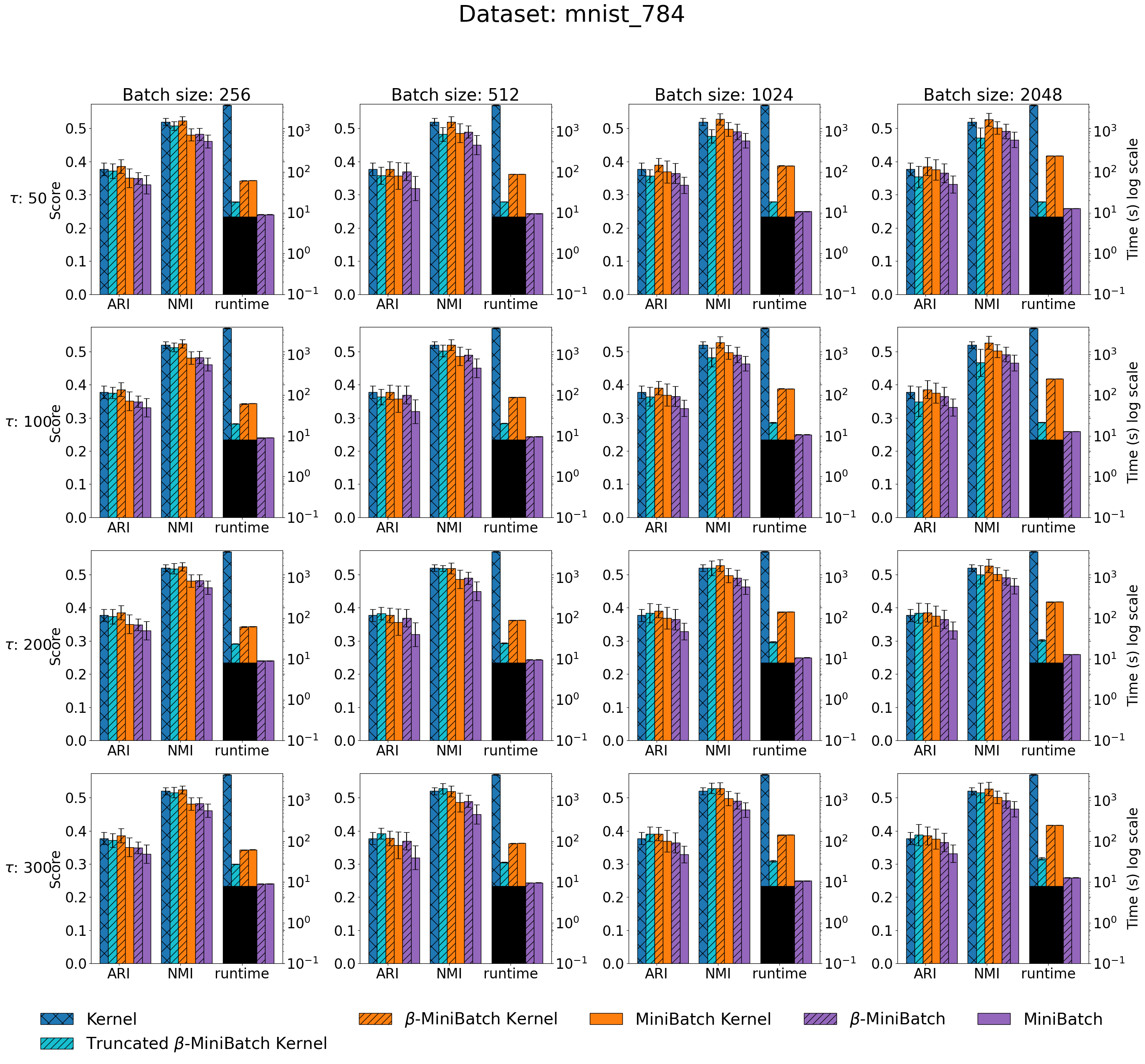}
    \caption{Experimental results on the MNIST dataset where the kernel algorithms use the Gaussian kernel. }
\end{figure}

\begin{figure}[h]
    \centering
    \includegraphics[width=0.75\linewidth]{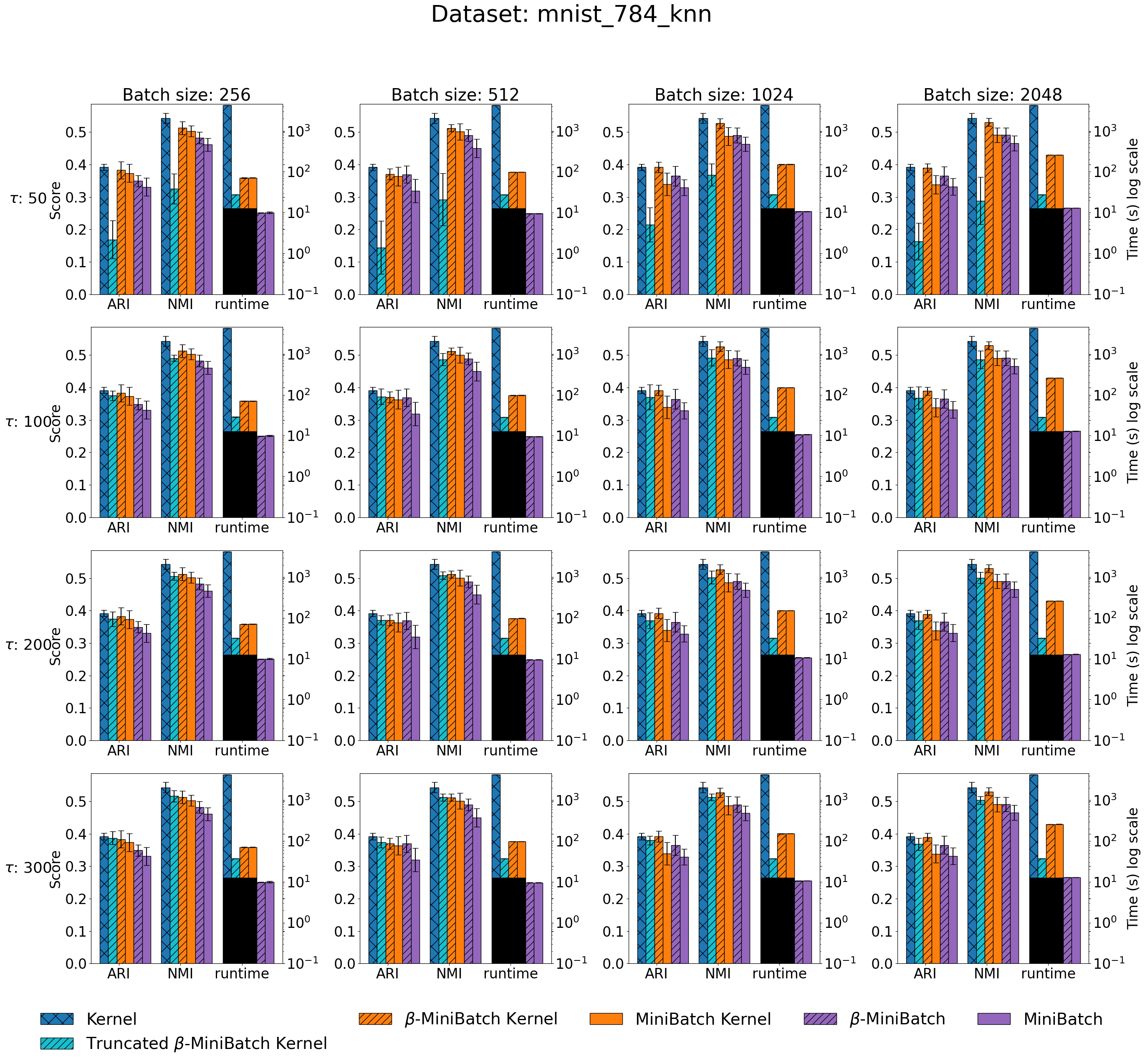}
    \caption{Experimental results on the MNIST dataset where the kernel algorithms use the k-nn kernel. }
\end{figure}

\begin{figure}[h]
    \centering
    \includegraphics[width=0.75\linewidth]{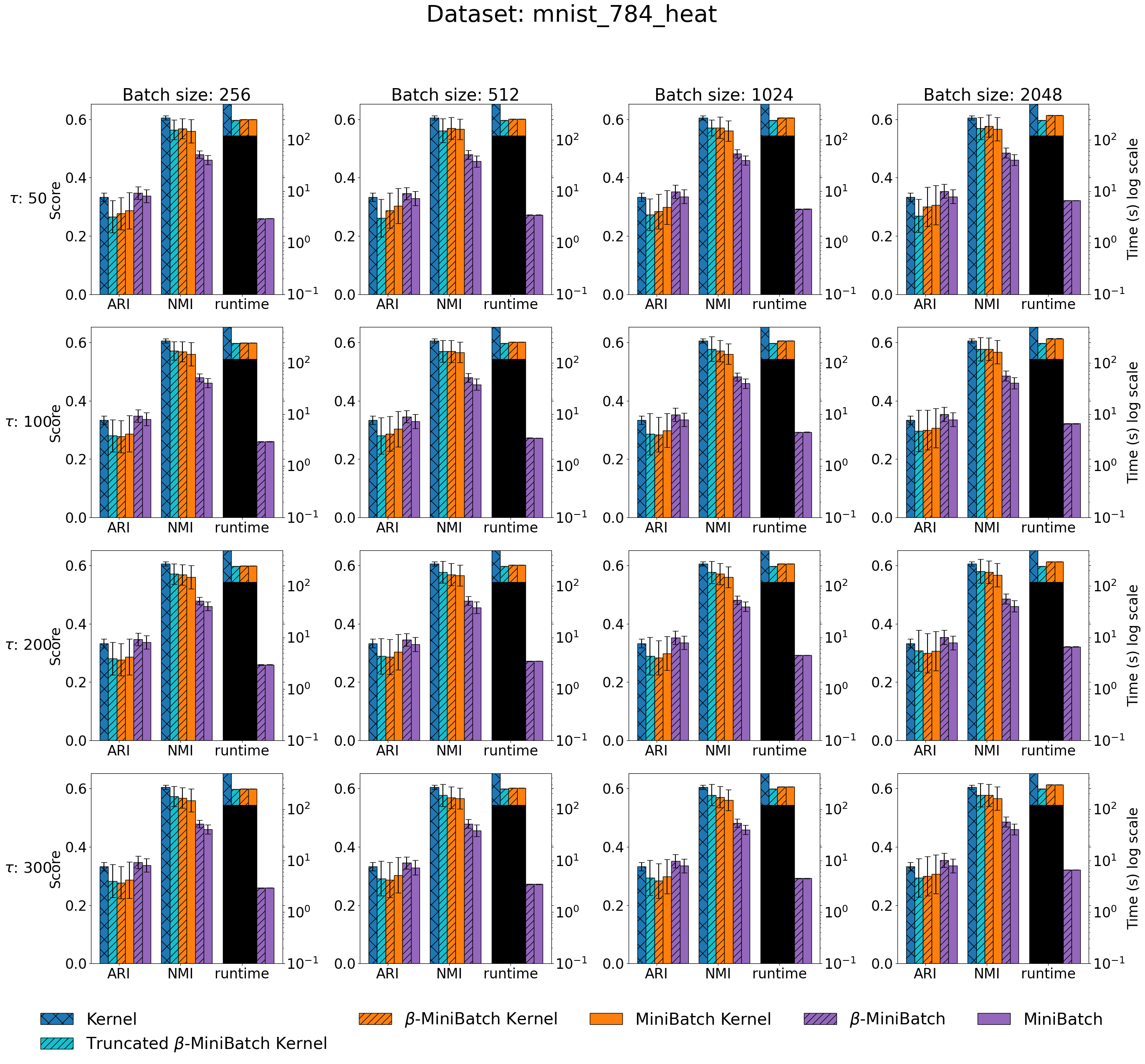}
    \caption{Experimental results on the MNIST dataset where the kernel algorithms use the Heat kernel. }
\end{figure}

\begin{figure}[h]
    \centering
    \includegraphics[width=0.75\linewidth]{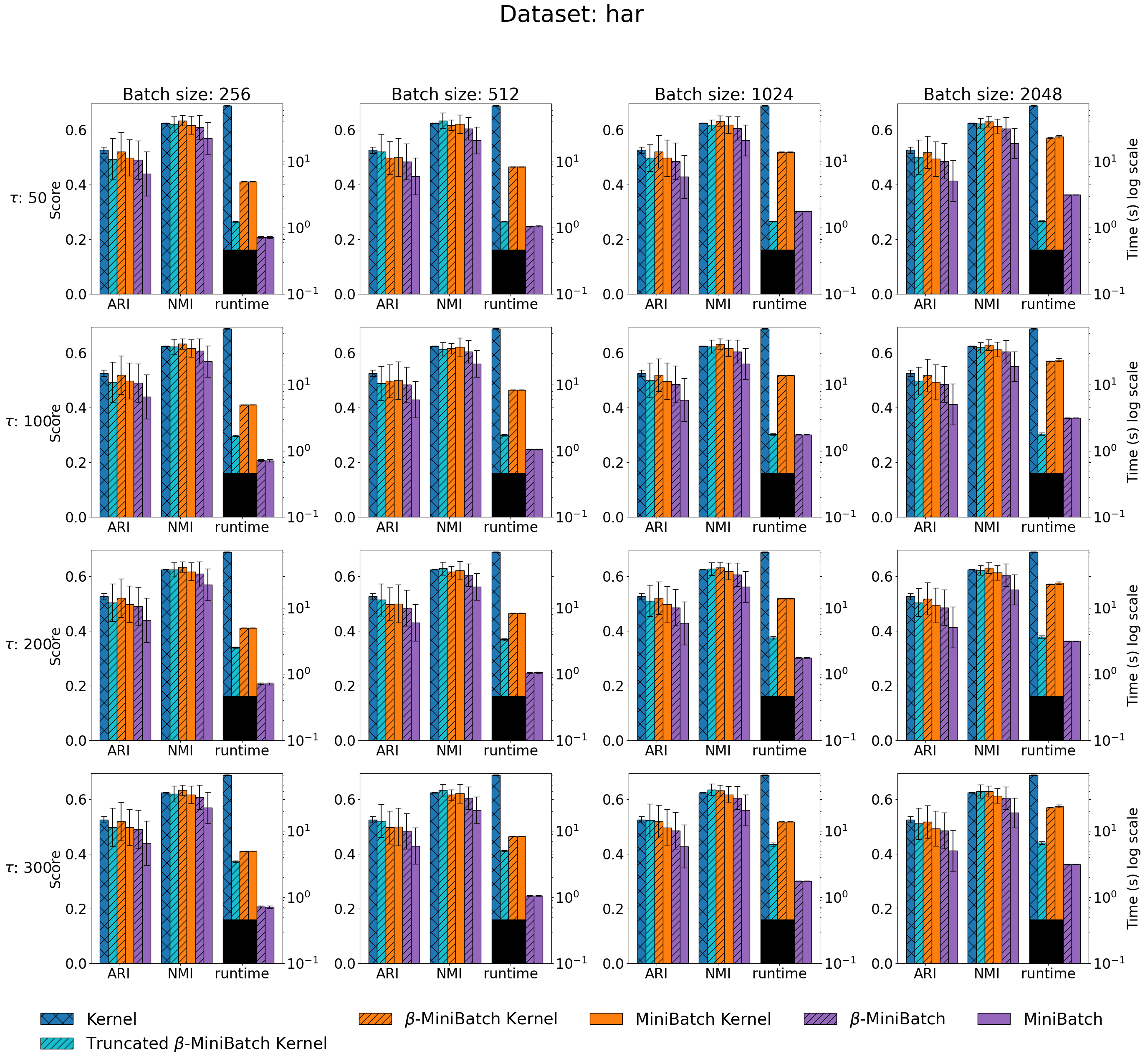}
    \caption{Experimental results on the Har dataset where the kernel algorithms use the Gaussian kernel. }
\end{figure}

\begin{figure}[h]
    \centering
    \includegraphics[width=0.75\linewidth]{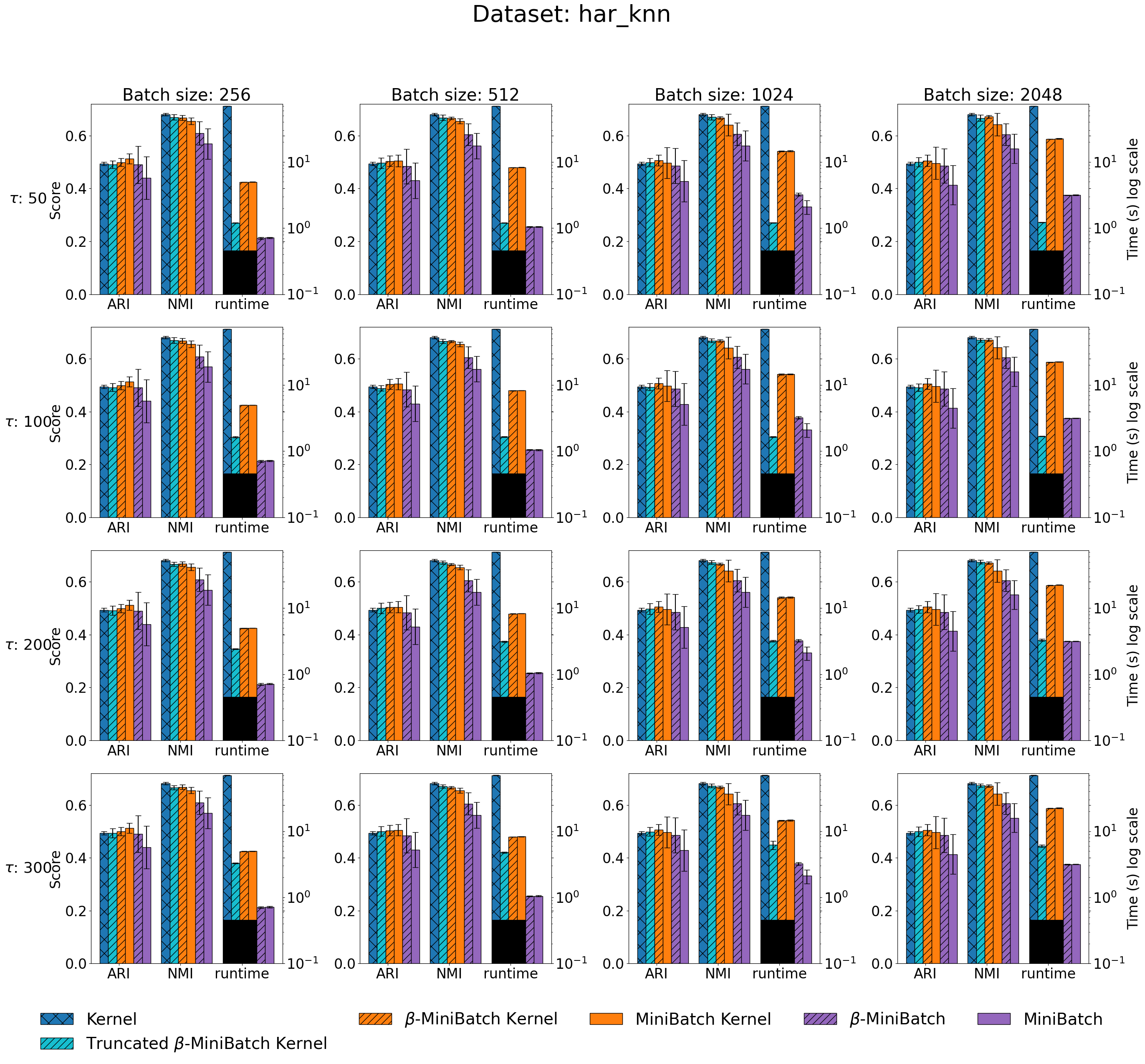}
    \caption{Experimental results on the Har dataset where the kernel algorithms use the k-nn kernel. }
\end{figure}

\begin{figure}[h]
    \centering
    \includegraphics[width=0.75\linewidth]{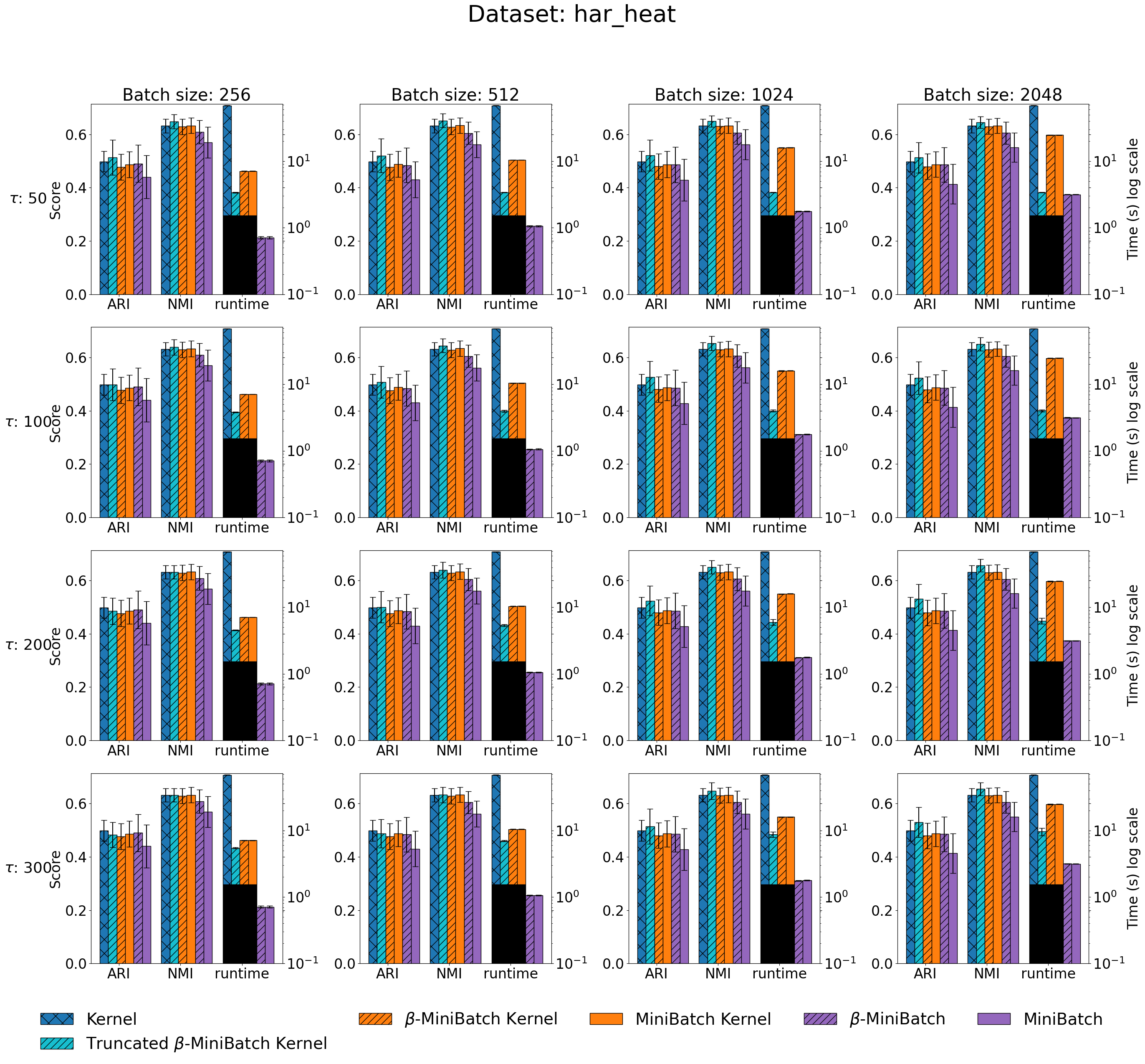}
    \caption{Experimental results on the Har dataset where the kernel algorithms use the Heat kernel. }
\end{figure}

\begin{figure}[h]
    \centering
    \includegraphics[width=0.75\linewidth]{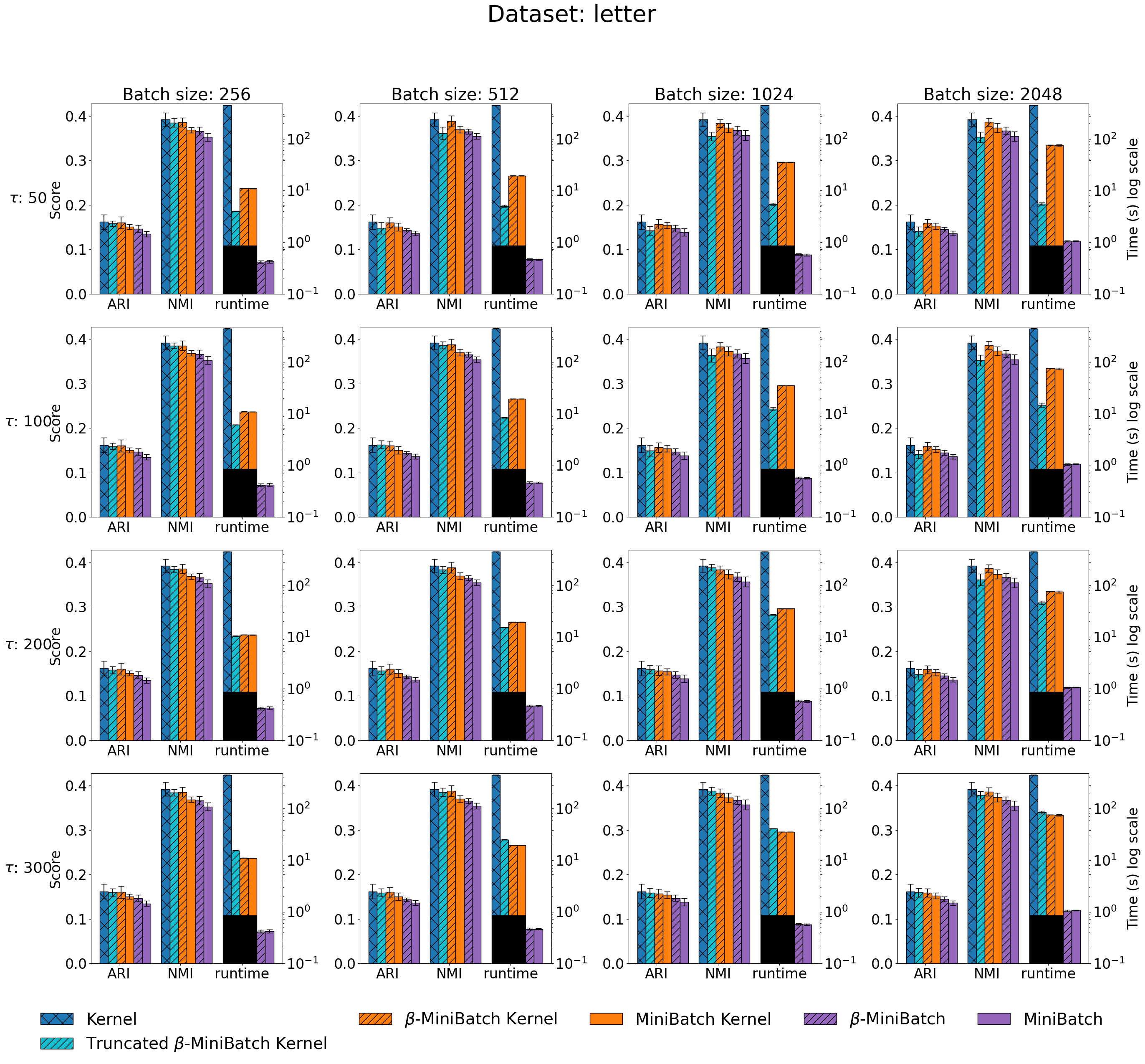}
    \caption{Experimental results on the Letter dataset where the kernel algorithms use the Gaussian kernel. }
\end{figure}

\begin{figure}[h]
    \centering
    \includegraphics[width=0.75\linewidth]{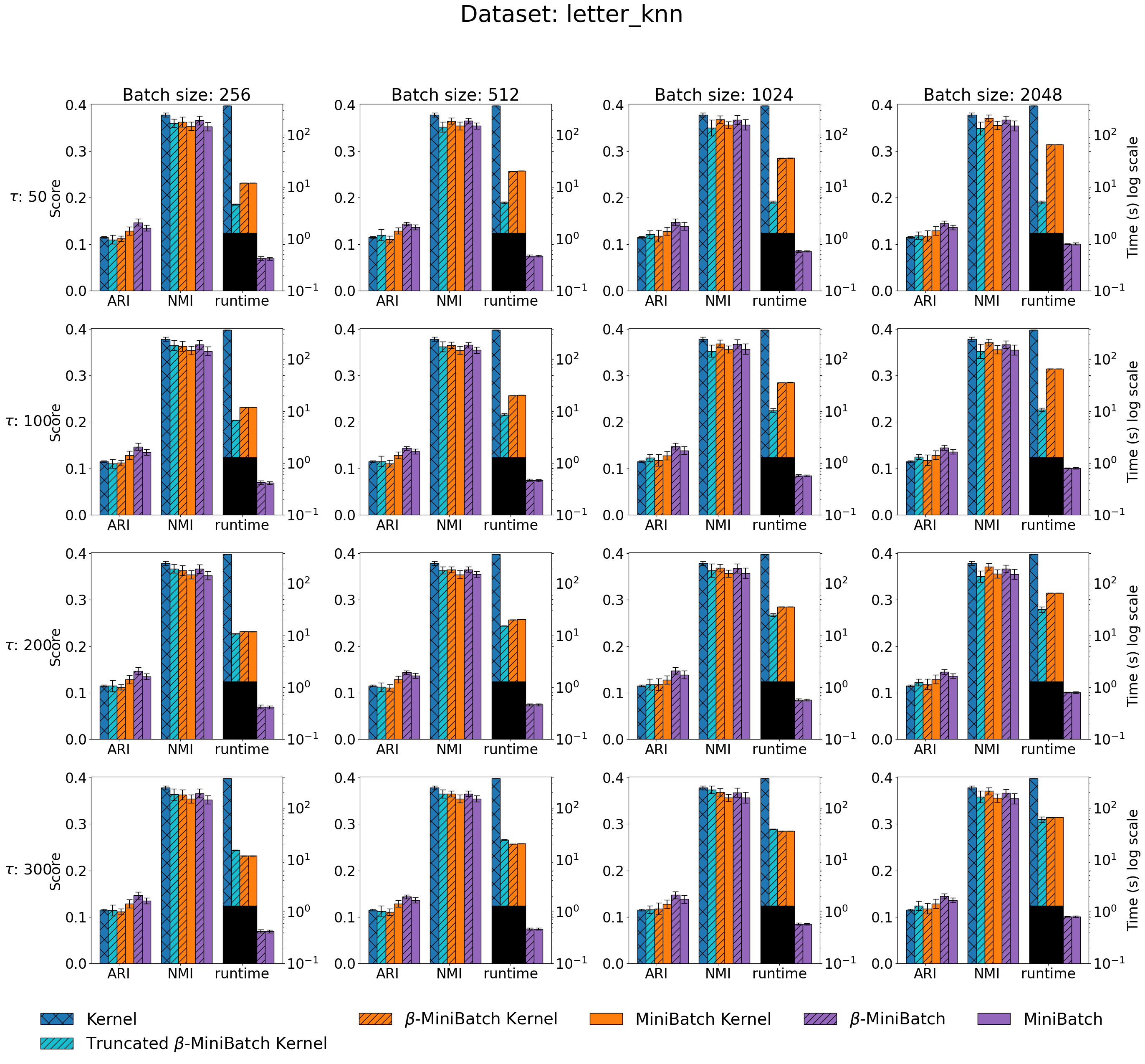}
    \caption{Experimental results on the Letter dataset where the kernel algorithms use the k-nn kernel. }
\end{figure}

\begin{figure}[h]
    \centering
    \includegraphics[width=0.75\linewidth]{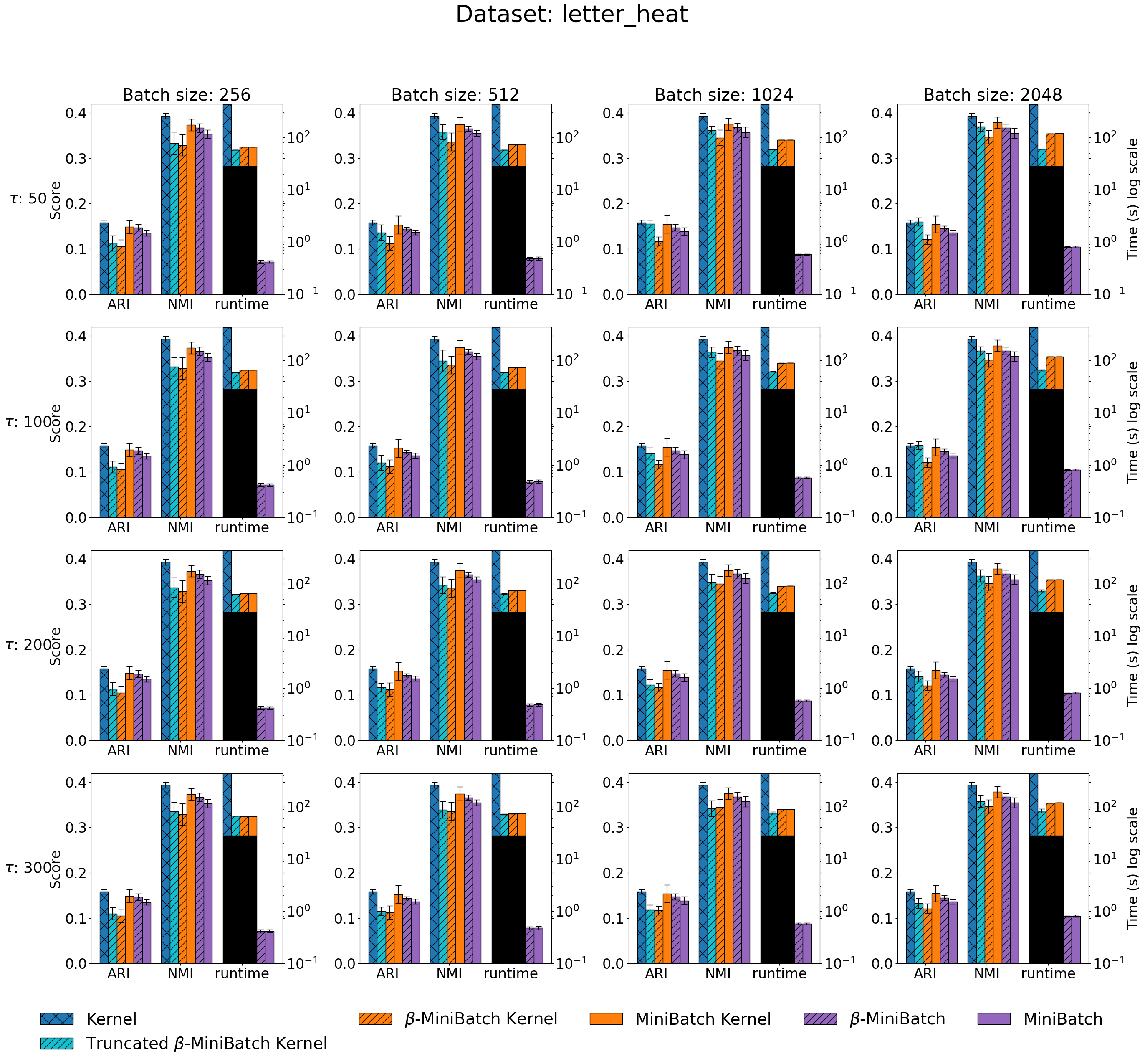}
    \caption{Experimental results on the Letter dataset where the kernel algorithms use the Heat kernel. }
\end{figure}

\begin{figure}[h]
    \centering
    \includegraphics[width=0.75\linewidth]{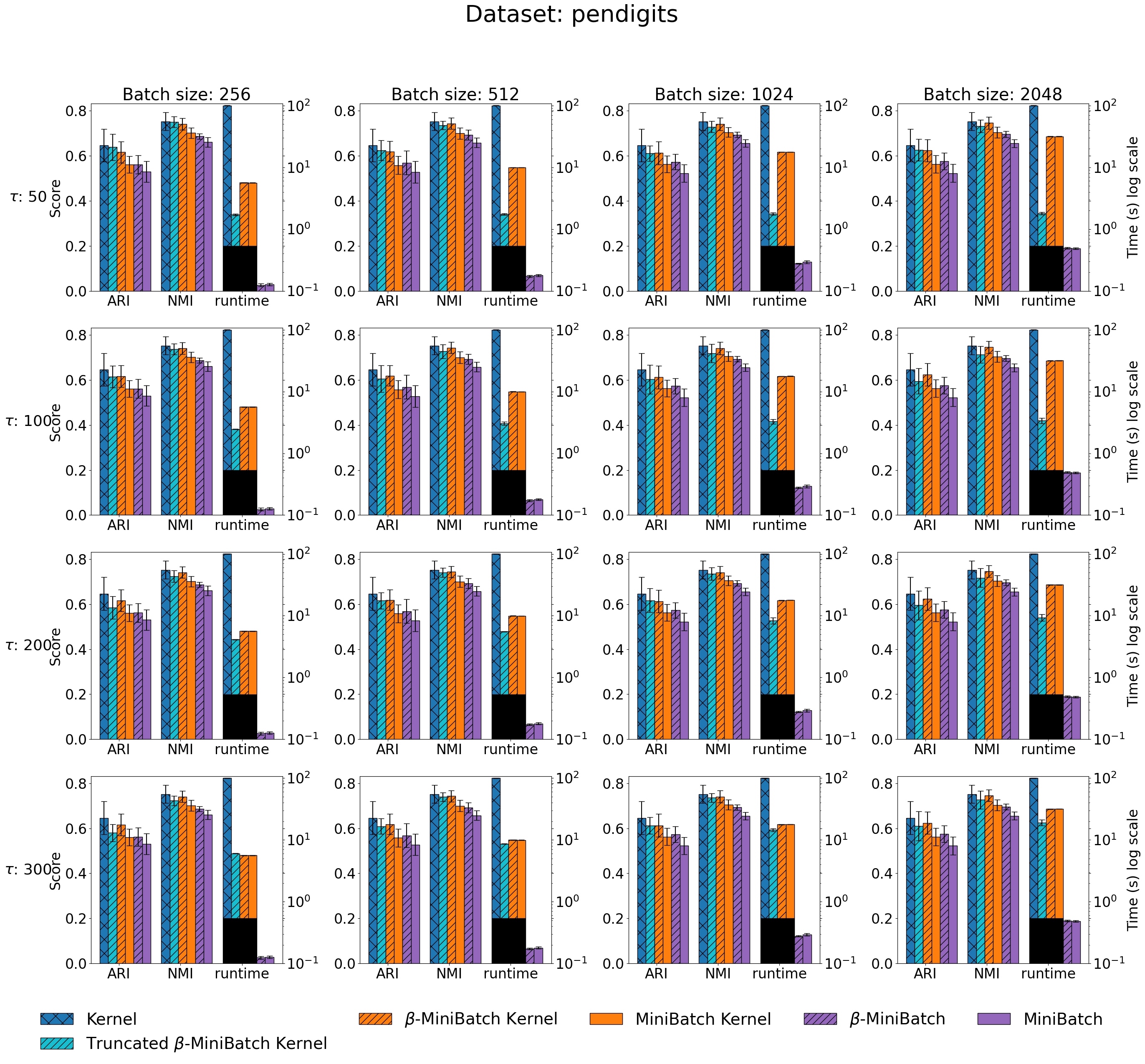}
    \caption{Experimental results on the Pendigits dataset where the kernel algorithms use the Gaussian kernel. }
\end{figure}

\begin{figure}[h]
    \centering
    \includegraphics[width=0.75\linewidth]{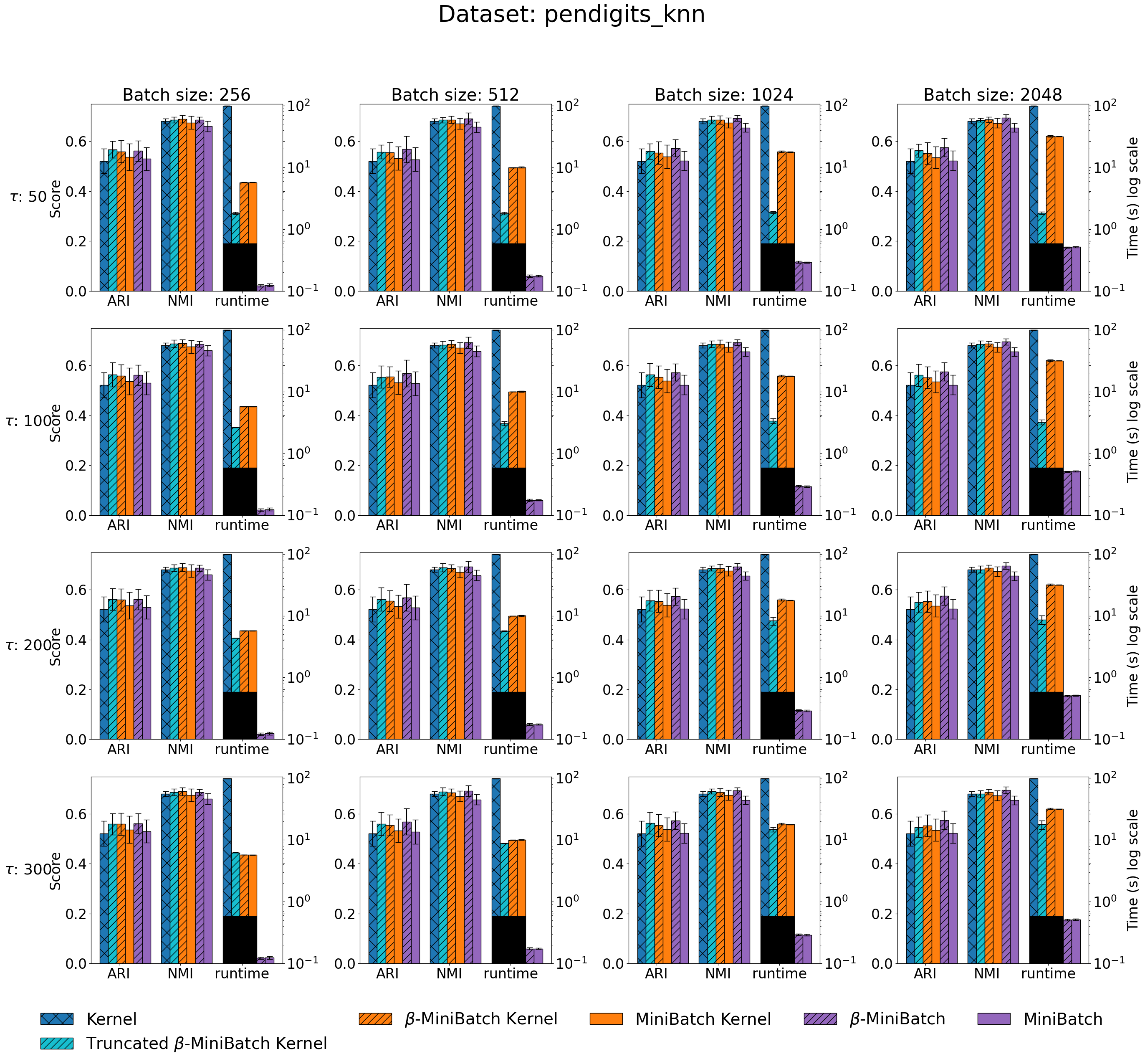}
    \caption{Experimental results on the Pendigits dataset where the kernel algorithms use the k-nn kernel. }
\end{figure}

\begin{figure}[h]
    \centering
    \includegraphics[width=0.75\linewidth]{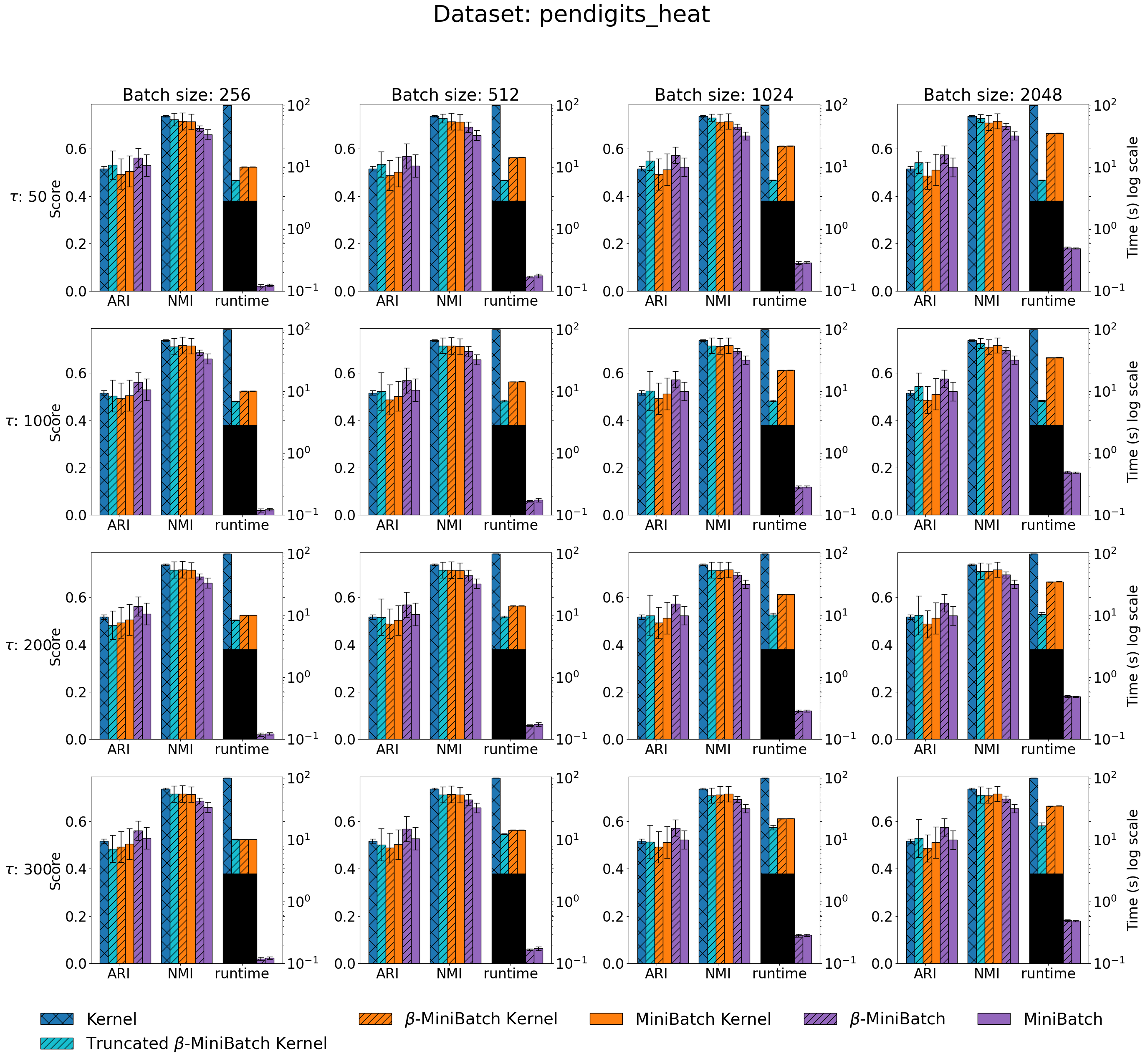}
    \caption{Experimental results on the Pendigits dataset where the kernel algorithms use the Heat kernel. }
\end{figure}

\end{document}